\useunder{\uline}{\ul}{}
\def\eqref#1{equation~\ref{#1}}
\def\1{\bm{1}}
\DeclareMathAlphabet{\mathsfit}{\encodingdefault}{\sfdefault}{m}{sl}
\SetMathAlphabet{\mathsfit}{bold}{\encodingdefault}{\sfdefault}{bx}{n}
\theoremstyle{plain}
\newtheorem{theorem}{Theorem}[section]
\newtheorem{property}[theorem]{Property}
\newtheorem{proposition}[theorem]{Proposition}
\newtheorem{definition}[theorem]{Definition}
\newtheorem{assumption}[theorem]{Assumption}
\theoremstyle{remark}
\newtheorem{remark}[theorem]{Remark}
\title{Task Arithmetic Through The Lens Of \\ One-Shot Federated Learning}
\author{\name Zhixu Silvia Tao\thanks{Work was done while Zhixu was an intern at Fujitsu Research of America.} \email zhixu.tao@princeton.edu \\
      \addr Operations Research and Financial Engineering\\
      Princeton University
      \AND
      \name Ian Mason\email imason@fujitsu.com \\
      \addr Fujitsu Research of America
      \AND
      \name Sanjeev Kulkarni \email kulkarni@princeton.edu\\
      \addr Operations Research and Financial Engineering\\
      Electrical and Computer Engineering\\
      Princeton University\\
    \AND
    \name Xavier Boix \email xboix@fujitsu.com\\
    \addr Fujitsu Research of America}
\begin{document}

\maketitle

\begin{abstract}
 Task Arithmetic is a model merging technique that enables the combination of multiple models' capabilities into a single model through simple arithmetic in the parameter space, without the need for additional fine-tuning or access to the original training data. However, the factors that determine the success of Task Arithmetic remain unclear. In this paper, we examine Task Arithmetic for multi-task learning by framing it as a one-shot Federated Learning problem. We demonstrate that Task Arithmetic is mathematically equivalent to the commonly used algorithm in Federated Learning, called Federated Averaging (FedAvg). By leveraging well-established theoretical results from FedAvg, we identify two key factors that impact the performance of Task Arithmetic: data heterogeneity and training heterogeneity. To mitigate these challenges, we adapt several algorithms from Federated Learning to improve the effectiveness of Task Arithmetic. Our experiments demonstrate that applying these algorithms can often significantly boost performance of the merged model compared to the original Task Arithmetic approach. This work bridges Task Arithmetic and Federated Learning, offering new theoretical perspectives on Task Arithmetic and improved practical methodologies for model merging. Our code is available at: \url{https://github.com/SilviaTao/one_shot_fl_task_arithmetic}
\end{abstract}

\section{Introduction}
With the proliferation of fine-tuned models across diverse domains, efficiently combining these models to achieve excellence across multiple tasks has emerged as a critical research challenge. \textit{Task Arithmetic} \citep{ilharco2023editing}, a recent technique in model merging, offers a simple yet effective solution. Consider a set of $T$ tasks, each defined by a training dataset and a loss function for fine-tuning. Let $\theta_{\operatorname{pre}}\in \mathbb{R}^d$ be the parameters of a pre-trained model and $\theta_t\in\mathbb{R}^d$ be the fine-tuned model parameters on task $t$. A \textit{task vector} $\tau_t$ is generated by subtracting the pre-trained model parameters from the fine-tuned model parameters: $\tau_t = \theta_{\operatorname{pre}}-\theta_t$. The sum of these task vectors, $\sum_{t=1}^T\tau_t$, produces a direction that enhances performance of the pre-trained model across multiple tasks for which the fine-tuned models have been trained. In other words, updating the pre-trained model parameters by $\theta_{\operatorname{pre}}+\sum_{t=1}^T\tau_t$ results in a multi-task model. A key advantage of this approach is that it only involves element-wise operations in the weight space, eliminating the need for additional fine-tuning. 

Despite its strong empirical performance, Task Arithmetic lacks substantial theoretical understanding. Only a small number of works have investigated this empirical success theoretically \citep{NEURIPS2023_d28077e5, porrello2024second}. In this paper, we take a step towards bridging the gap between theory and practice by framing Task Arithmetic as a form of one-shot Federated Learning. 

Federated Learning \citep{pmlr-v54-mcmahan17a}, a distributed machine learning paradigm, enables devices to collaboratively train one shared model without exchanging the raw data. Federated Learning's goal is to retain data privacy and reduce computational costs, as all raw data remains stored locally on edge devices. In a typical Federated Learning training process, a server coordinates the training process by iterating through the following steps \citep{kairouz2021advances}. First, the server broadcasts the current global model parameters and a training program to all the devices. Then each device locally computes an update to the model by using its own data. Finally, the server aggregates all the local updates from devices and updates the current global model by using the aggregated local updates. A commonly used algorithm for this training process is Federated Averaging (FedAvg) \citep{pmlr-v54-mcmahan17a}. In one-shot Federated Learning, the server learns a global model in only a single round of communication between itself and all the devices \citep{guha2019one}. 

In this work, we show that using one-shot FedAvg is equivalent to Task Arithmetic, thus offering a new perspective on Task Arithmetic through the lens of one-shot Federated Learning. Using the connection between Federated Learning and Task Arithmetic, we can leverage the extensive theoretical and algorithmic advancements in Federated Learning to better understand when Task Arithmetic is effective and how it can be improved. To the best of our knowledge, this is the first study to bridge Federated Learning and Task Arithmetic. Our main contributions and the structure of the paper are summarized as follows. 

\textbf{Bridge Task Arithmetic and Federated Learning:} In Section \ref{ta_is_os_fl}, we establish the connection between Task Arithmetic and one-shot Federated Averaging, formalizing Task Arithmetic by using notions from Federated Learning.

\textbf{Analyze the Impact of Data and Training Heterogeneity in Task Arithmetic:} In Section \ref{theory}, we use existing theoretical results from Federated Learning to deepen our understanding of Task Arithmetic. Specifically, we analyze how data heterogeneity, which slows convergence in FedAvg, also slows the convergence of Task Arithmetic (Section \ref{data_het_in_ta}), and how training heterogeneity, which causes objective inconsistencies in Federated Learning, similarly impacts Task Arithmetic (Section \ref{sys_het_in_ta}).

\textbf{Identify and Adapt Federated Learning Algorithms for Task Arithmetic:} In Section \ref{selected_fl_algorithms}, we adapt several Federated Learning algorithms to mitigate the challenges posed by data and training heterogeneity. To achieve effective adaptions, we first discuss challenges in adapting Federated Learning algorithms (Section \ref{challenges_in_adaption}), and then we recommend four Federated Learning algorithms to enhance Task Arithmetic (Section \ref{subsection_on_algorithms}). 

\textbf{Experimentally Show That Federated Learning Algorithms Often Improve Task Arithmetic:} We conduct experiments on vision-language model CLIP \citep{radford2021learning} for image classification tasks in Section \ref{exp_on_mixing_clip}, and on large language model Llama2 \citep{touvron2023llama} for instruction following, mathematical reasoning and code generation in Section \ref{exp_on_mixing_llms}. Our experiments confirm that adapting Federated Learning algorithms often improves the merged model's performance compared to Task Arithmetic.

\section{Related Work}\label{related_work}
Our work bridges Federated Learning and Task Arithmetic, a prominent approach within the growing domain of model merging. Thus, this section reviews related work on both model merging and Federated Learning.


\textbf{Model Merging}\quad Task Arithmetic is one of many recent works on model merging \citep{wortsman2022model, pmlr-v119-frankle20a, wortsman2022robust, NEURIPS2022_70c26937, ainsworth2022git, don2022cold}. Though the term model merging is relatively new, firstly formalized by \citet{NEURIPS2022_70c26937}, the concept has received significant investigation \citep{wortsman2022robust,wortsman2022model, izmailov2018averaging, yang2019swalp, tarvainen2017mean, gupta2020stochastic}. For example, \citet{wortsman2022robust} average the pre-trained model parameters and fine-tuned parameters to enhance the robustness of fine-tuned model against distribution shifts. \citet{wortsman2022model} show that averaging parameters of multiple fine-tuned models with different hyperparameter configurations can improve robustness and accuracy. \citet{izmailov2018averaging} demonstrate that averaging parameters along a single SGD trajectory leads to better generalization. 

Task Arithmetic, introduced by \citet{ilharco2023editing}, refines model merging by introducing task vectors and a hyperparameter $\lambda$ to control how much task vectors modify pre-trained model parameters. This method has inspired various follow-up work on using simple arithmetic operations for model merging \citep{yu2024language, zhang2023composing, NEURIPS2023_d28077e5, stoica2023zipit, yadav2024ties, yang2023adamerging} such as sparsifying task vectors \citep{yu2024language}, merging parameter-efficient modules \citep{zhang2023composing}, fine-tuning in linearized model spaces \citep{NEURIPS2023_d28077e5} and resolving sign interference of task vectors \citep{yadav2024ties}. 

A concurrent work of Task Arithmetic is \citet{jin2022dataless}, who propose the \textit{RegMean}. When merging two linear regression models, the model merging problem can be formulated as an optimization problem which has a closed-form solution. RegMean uses this solution to merge every linear layer of fine-tuned models. Unlike Task Arithmetic, RegMean is limited to linear layers and requires access to the input features of all linear layers. While \citet{jin2022dataless} also note that model merging is an extreme case of Federated Learning with single communication round, they do not explore this further. Moreover, RegMean focuses on merging linear layers, distinguishing it from our framework which addresses more general model merging problems. 

There is a substantial body of research dedicated to understanding the effectiveness of model merging \citep{draxler2018essentially, entezari2021role, frankle2020linear, garipov2018loss, benton2021loss, li2018visualizing, foret2020sharpness, simsek2021geometry}. Some studies focus on the theory of linear model connectivity \citep{draxler2018essentially, entezari2021role, frankle2020linear, garipov2018loss}, while others emphasize the flatness of the loss landscape \citep{benton2021loss, li2018visualizing, foret2020sharpness, simsek2021geometry}. However, there has been relatively little work addressing the effectiveness of Task Arithmetic except for \citet{NEURIPS2023_d28077e5, porrello2024second}. Our work aims to bridge this gap.

\textbf{Federated Learning}\quad In Federated Learning, \textit{FedAvg} \citep{pmlr-v54-mcmahan17a} is widely used to solve the following distributed optimization problem across $M$ devices: $
\min_{\theta\in \mathbb{R}^d}L(\theta):=\frac{1}{M}\sum_{m=1}^M L_m(\theta)$ where $L_m(\theta):=\mathbb{E}_{x_m\sim \mathcal{D}_m}[\ell(\theta, x_m)]$
is the objective function on each device $m$, defined by some loss function $\ell$ and data distribution $\mathcal{D}_m$. The core idea behind \textit{FedAvg} is to perform local Stochastic Gradient Descent (local SGD) on each device, followed by model averaging on the server. A substantial body of research has analyzed the performance of FedAvg and local SGD
\citep{li2019convergence, karimireddy2020scaffold,khaled2020tighter,koloskova2020unified, woodworth2020minibatch,woodworth2020local, glasgow2022sharp, patel2024limits, wang2022unreasonable, woodworth2021min}. 

A key challenge in the theoretical analysis of FedAvg arises from data heterogeneity, where each device $m$ has a different data distribution $\mathcal{D}_m$. In the homogeneous setting where $\mathcal{D}_m = \mathcal{D}\;\forall m$, \citet{woodworth2021min} have established the min-max complexity of FedAvg with smooth and convex loss functions. In the more complex heterogeneous setting, various works have derived the convergence rate of FedAvg under different assumptions about data heterogeneity \citep{li2019convergence, khaled2020tighter,koloskova2020unified, woodworth2020minibatch,woodworth2020local, glasgow2022sharp, patel2024limits, wang2022unreasonable}. In this work, rather than focusing on extending existing theoretical results, we focus on using these results to analyze Task Arithmetic. 

To address the challenges posed by data heterogeneity, extensive research has focused on designing algorithms to improve the performance of FedAvg \citep{karimireddy2020scaffold, reddi2020adaptive, li2020federated, 
ye2023feddisco, li2021fedbn, tenison2022gradient, wang2020tackling}. Some work has enhanced optimization algorithms by regulating the local training process \citep{karimireddy2020scaffold, reddi2020adaptive, li2020federated, wang2020tackling, li2021fedbn}, while others have proposed alternative aggregation methods beyond simple averaging \citep{tenison2022gradient, ye2023feddisco}. Another line of work focuses on personalized Federated Learning \citep{smith2017federated, tan2022towards, mansour2020three, li2021model, finn2017model}, addressing data heterogeneity by adapting the global model locally for each device. 

Aside from data heterogeneity, \citet{wang2020tackling} notice different local training process (which we refer to training heterogeneity) exacerbates the convergence of federated optimization algorithms, leading them to converge to a stationary point of an objective function inconsistent with the original objective function. 

\section{Task Arithmetic is One-Shot FedAvg}\label{ta_is_os_fl}
To deepen our understanding of the mechanism behind Task Arithmetic in multi-task learning, in this section, we establish a connection between one-shot FedAvg and Task Arithmetic. 

Given $T$ tasks, the objective in multi-task learning is to train a model parameterized by $\theta\in\mathbb{R}^d$ that performs well across all $T$ tasks. This can be formulated as minimizing the following multi-task objective function:
\begin{align}\label{global_obj_function}
    L(\theta) = \frac{1}{T}\sum_{t=1}^TL_t(\theta).
\end{align}
Here $L_t(\theta) = \mathbb{E}_{(x_t, y_t)\sim \mathcal{D}_t}[\ell(\theta;x_t, y_t)]$ represents the objective function for task $t$, where $(x_t, y_t)$ is a pair of input and output drawn from the data distribution $\mathcal{D}_t$, and $\ell(\cdot)$ denotes the loss function associated with the data and model. This formulation aligns with that used in Federated Learning, where each device $t$ has a local objective function $L_t$. $L(\theta)$ is referred to as the global objective function. 

In Federated Learning, the global objective function (\ref{global_obj_function}) is often optimized using FedAvg. In FedAvg, each local objective function is optimized through several iterations of Stochastic Gradient Descent (SGD), after which the server averages all the local updates. This process, also known as \textit{local Stochastic Gradient Descent} (local SGD), is repeated over multiple communication rounds. Formally, given $R$ communication rounds and initial global model parameters $\theta_0$, FedAvg follows the following update process $\forall r\in [R]$:
\begin{align}\label{theta_FL}
    &\theta_{t, r}^{(0)} = \theta_{r-1}\,\quad\forall t\in [T]\nonumber\\
    &\theta_{t, r}^{(k+1)} = \theta_{t, r}^{(k)}-\eta_{t, r}^{(k)}g_t(\theta_{t, r}^{(k)})\quad\forall k\in [0, K_t-1],\forall t\in [T]\nonumber\\
    &\theta_{r} = \theta_{r-1}+\frac{\beta}{T}\sum_{t=1}^T(\theta_{t, r}^{(K_t)}-\theta_{r-1}) = \theta_{r-1}-\frac{\beta}{T}\sum_{t=1}^T\sum_{k=0}^{K_t-1}\eta_{t, r}^{(k)}g_t(\theta_{t, r}^{(k)}).
\end{align}
Here, $\theta_r$ denotes the global model parameters obtained by optimizing the global objective at the end of the $r$-th communication round, and $\theta_{t, r}^{(k)}$ denotes the local model parameters obtained by optimizing the $t$-th local objective at the $k$-th local step during round $r$. The learning rate used for this step is $\eta_{t, r}^{(k)}$. The stochastic gradient of the $t$-th local objective function $L_t$ is $g_t(\cdot)$, and $\beta$ is the outer step size used to aggregate all local updates. In the one-shot setting where $R=1$, the update simplifies to the following: 
\begin{align}\label{theta_one_shot}
    &\theta_{OS} = \theta_{0}+\frac{\beta}{T}\sum_{t=1}^T(\theta_{t}^{(K_t)}-\theta_{0}) = \theta_{0}-\frac{\beta}{T}\sum_{t=1}^T\sum_{k=0}^{K_t-1}\eta_t^{(k)}g_t(\theta_{t}^{(k)})
\end{align}
where $\theta_{OS}$ denotes the parameters generated by one-shot FedAvg.

In Task Arithmetic, the procedure mirrors the process in FedAvg. Each task $t$ independently minimizes its own objective function $L_t$ by performing $K_t$ iterations of SGD with learning rates $\{\eta_t^{(0)},\dots, \eta_t^{(K_t-1)}\}$, starting from the same pre-trained model parameters $\theta_0$ and converging to a minimizer $\theta^*_t\in \arg\min L_t(\theta)$. This yields
$\theta^*_t = \theta_t^{(0)}-\sum_{k=0}^{K_t-1}\eta_t^{(k)}g_t(\theta_t^{(k)})$ where $\theta_t^{(0)} = \theta_0\,\forall t$. The task vector $\tau_t$ is defined by $\tau_t =  \theta_t^*-\theta_t^{(0)}$.
Using Task Arithmetic, a new set of parameters can be constructed as
\begin{align}\label{theta_TA}
    \theta_{TA} &=\theta_0+\lambda \sum_{t=1}^T\tau_t = \theta_0 - \lambda\sum_{t=1}^T\sum_{k=0}^{K_t-1}\eta_t^{(k)}g_t(\theta_t^{(k)})
\end{align}
where $\lambda$ is a hyperparameter called the scaling coefficient \citep{ilharco2023editing}, which controls the extent to which the sum of task vectors is added back to the pre-trained parameters. By comparing equations (\ref{theta_one_shot}) and (\ref{theta_TA}), we see that performing Task Arithmetic is equivalent to one-shot FedAvg with outer step size $\beta = \lambda T$.

\section{Adapting Federated Learning Theory for Task Arithmetic}\label{theory}
In this section, we extend theoretical insights from Federated Learning to Task Arithmetic, identifying two main factors that impact its performance: \textit{data heterogeneity} and \textit{training heterogeneity}. We analyze how these factors impact the convergence of Task Arithmetic to the global minimum of a convex objective function and to the local minimum of a non-convex objective function.

\subsection{Data Heterogeneity}\label{data_het_in_ta}This subsection is dedicated to understanding how data heterogeneity influences the performance of Task Arithmetic. Data heterogeneity is common in Federated Learning and refers to the situation when data on each device is non-independent and identically distributed (non-i.i.d.) \citep{li2020federated, ye2023heterogeneous, wen2023survey}. In the context of Task Arithmetic, we define data heterogeneity as follows:
\begin{definition}(Data Heterogeneity) 
In Task Arithmetic, data heterogeneity refers to the non-i.i.d nature of training data across different tasks.
\end{definition}
In the convergence analysis of FedAvg, data heterogeneity has been a longstanding issue. Given the connection between FedAvg and Task Arithmetic, in order to understand how data heterogeneity impacts Task Arithmetic, it is helpful to first review existing findings on data heterogeneity in FedAvg. We begin by introducing several standard assumptions commonly used in the literature \citep{li2019convergence, karimireddy2020scaffold,khaled2020tighter,koloskova2020unified, woodworth2020minibatch,woodworth2020local, glasgow2022sharp, patel2024limits, wang2022unreasonable, woodworth2021min}.
\begin{assumption}\label{convexity_assumption} (Convexity and Smoothness) Assume all the task objective functions $L_t$ are convex and $H$-smooth. That is, $\forall t\in [T]$ and $\forall \theta,\varphi\in \mathbb{R}^d$,
$L_t(\theta)\le L_t(\varphi)+\langle \nabla L_t(\varphi), \theta-\varphi\rangle+\frac{H}{2}\|\theta-\varphi\|^2.$
\end{assumption}
\begin{assumption}\label{stochastic_assumption} (Bounded Stochastic Noise) The stochastic gradient computed by each task is unbiased with bounded variance. That is, $\forall \theta\in \mathbb{R}^d$,
\[\mathbb{E}_{(x_t, y_t)\sim \mathcal{D}_t}[\nabla \ell(\theta;x_t, y_t)] = \nabla L_t(\theta)\quad\text{and}\quad\mathbb{E}_{(x_t, y_t)\sim \mathcal{D}_t}[\|\nabla \ell(\theta;x_t, y_t)-\nabla L_t(\theta)\|^2]\le\sigma^2.\]
    
\end{assumption}
\begin{assumption}\label{init_assumption} (Bounded Initialization Error) Assume $\forall \theta^*\in \arg\min_{\theta\in \mathbb{R}^d}L(\theta)$, $\exists B$ such that the initialization $\theta_0$ satisfies $\|\theta_0-\theta^*\|\le B$.
\end{assumption}
To facilitate the analysis, we assume that all task objective functions are optimized with the same number of iterations, denoted $K_t = K \,\forall t\in [T]$, and that they use constant learning rates $\eta_t^{k} = \eta \,\forall t\in [T]$ and $\forall k\in [K]$. Additionally, we set the outer step size to $\beta = 1$, reducing Task Arithmetic to model averaging.

Although there is no universal definition of data heterogeneity, several notions are commonly referenced in the literature \citep{li2019convergence, karimireddy2020scaffold,khaled2020tighter,koloskova2020unified, woodworth2020minibatch,woodworth2020local, glasgow2022sharp, patel2024limits,woodworth2021min}.  One widely adopted first-order notion of data heterogeneity is given by the following assumption \citep{koloskova2020unified, patel2024limits, woodworth2020minibatch, glasgow2022sharp}.
\begin{assumption}\label{het_assumption}(Bounded First-Order Data Heterogeneity at Optima) A set of objective functions $\{L_t\}_{t=1}^T$ satisfies the bounded first-order heterogeneity at optima if $\forall \theta^*\in \arg\min_{\theta \in \mathbb{R}^d} L(\theta)$, $\exists \zeta_*$ such that
\[\frac{1}{T}\sum_{t=1}^T\|\nabla L_t(\theta^*)\|^2\le \zeta^2_*.\]
\end{assumption}

The quantity $\zeta_*^2$ measures the diversity among the set of functions $\{L_t\}_{t=1}^T$ at the optima of the averaged multi-task objective function $L(\theta)$. 
Here, the notion of data heterogeneity is defined through objective functions, while \citet{NEURIPS2023_d28077e5} defines the Task Arithmetic property from the perspective of network functions. In Appendix \ref{tangent_space_extension}, we further explore the connection between this notion of data heterogeneity and the Task Arithmetic property proposed in \citet{NEURIPS2023_d28077e5}.

Using the notation from \citet{patel2024limits}, we define a learning problem satisfying Assumptions \ref{convexity_assumption}, \ref{stochastic_assumption}, \ref{init_assumption}, and \ref{het_assumption} as belonging to class $\mathcal{P}_{\zeta_*}^{H, B,\sigma}$. Based on the upper bound from \citet{koloskova2020unified} and the lower bound from \citet{patel2024limits}, the following theorem characterizes the convergence of one-shot FedAvg.   
\begin{theorem}\label{convergence_rate_os}
Assume there is only one communication round $R=1$. Then for any $K\ge 2, T, H, B, \sigma, \zeta_*^2$,
\begin{align}\label{convergence_rate_os_lower_bound}
    \min_{\{L_t\}_{t=1}^T\in \mathcal{P}_{\zeta_*}^{H, B,\sigma}}\mathbb{E}[L(\theta_{OS})]-L(\theta^*)\succeq HB^2+\frac{(H\sigma^2B^4)^{1/3}}{K^{1/3}}+\frac{\sigma B}{\sqrt{TK}}+(H\zeta_*^2B^4)^{1/3}
\end{align}
and 
\begin{align}\label{convergence_rate_os_upper_bound}
    \max_{\{L_t\}_{t=1}^T\in \mathcal{P}_{\zeta_*}^{H, B,\sigma}}\mathbb{E}[L(\theta_{OS})]-L(\theta^*)\preceq HB^2+\frac{(H\sigma^2B^4)^{1/3}}{K^{1/3}}+\frac{\sigma B}{\sqrt{TK}}+(H\zeta_*^2B^4)^{1/3}.
\end{align}
\end{theorem}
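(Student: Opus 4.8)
The statement is a two-sided (matching lower and upper) bound, and since the text preceding it attributes the upper bound to \citet{koloskova2020unified} and the lower bound to \citet{patel2024limits}, my plan is not to analyze the iteration from scratch but to \emph{specialize} the known general-$R$ FedAvg guarantees to a single communication round. The observation that makes this legitimate is already in hand: by the derivation around (\ref{theta_one_shot}), the one-shot iterate $\theta_{OS}$ (taken with $\beta=1$, $K_t=K$, and constant $\eta_t^{(k)}=\eta$, as stipulated for this subsection) is exactly the $R=1$ instance of the FedAvg update (\ref{theta_FL}) to which both cited results apply. Hence the whole argument reduces to writing out the $R$-dependence of each cited bound and setting $R=1$.

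For the upper bound (\ref{convergence_rate_os_upper_bound}), I would invoke the general convex convergence theorem of \citet{koloskova2020unified}, which under Assumptions \ref{convexity_assumption}--\ref{het_assumption} bounds the optimality gap $\mathbb{E}[L(\theta_R)]-L(\theta^*)$, after tuning the inner step size $\eta$, by the sum of an optimization/initialization term of order $HB^2/R$, a statistical term of order $\sigma B/\sqrt{TRK}$, a noise-induced client-drift term of order $(H\sigma^2B^4)^{1/3}/(R^{2/3}K^{1/3})$, and a heterogeneity-induced drift term of order $(H\zeta_*^2B^4)^{1/3}/R^{2/3}$. Setting $R=1$ collapses these into precisely the four terms on the right-hand side, and I would transfer the result to $\theta_{OS}=\theta_1$ via the identification above. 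The one point worth flagging is the leading $HB^2$: because controlling client drift forces the admissible step size down to $\eta\lesssim 1/(HK)$, the initialization term $B^2/(\eta R K)$ cannot decay in $K$ when $R=1$, which is exactly why a single round leaves an irreducible $HB^2$ together with an irreducible heterogeneity gap $(H\zeta_*^2B^4)^{1/3}$.

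For the lower bound (\ref{convergence_rate_os_lower_bound}), I would import the hard-instance construction of \citet{patel2024limits}, who exhibit objective collections inside $\mathcal{P}_{\zeta_*}^{H,B,\sigma}$ on which any optimally tuned run of local SGD with $R$ rounds and $K$ local steps must incur error matching the same four-term expression with its $R$-dependence; symmetric quadratics do not suffice, so the non-vanishing $HB^2$ and $(H\zeta_*^2B^4)^{1/3}$ contributions genuinely rely on their non-quadratic constructions. Specializing their bound to $R=1$ and to every $K\ge 2$ yields (\ref{convergence_rate_os_lower_bound}), after which a termwise comparison with the $R=1$ upper bound establishes the claimed tightness. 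The main obstacle is therefore purely bookkeeping but delicate: I must verify (i) that the step-size schedule achieving the Koloskova upper bound remains admissible once $R=1$ is fixed, and (ii) that the Patel hard instance stays feasible in $\mathcal{P}_{\zeta_*}^{H,B,\sigma}$ for every $K\ge 2$ after collapsing to a single round, so that the two $R=1$ specializations really do share all four terms. Finally, since Task Arithmetic equals one-shot FedAvg with $\beta=\lambda T$ (obtained by comparing (\ref{theta_one_shot}) and (\ref{theta_TA})), the characterization transfers verbatim to $\theta_{TA}$ under the present simplifications, here with $\lambda=1/T$.
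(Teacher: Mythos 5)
Your proposal matches the paper's approach exactly: the paper gives no independent proof of this theorem, but obtains it by citing the general-$R$ upper bound of \citet{koloskova2020unified} and the lower bound of \citet{patel2024limits} and specializing both to $R=1$ under the stated simplifications ($\beta=1$, $K_t=K$, constant $\eta$), which is precisely the reduction you describe, down to the same $R$-dependence of each of the four terms. Your added remarks on step-size admissibility and the feasibility of the hard instance are reasonable diligence but go beyond what the paper itself verifies.
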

Here, $\succeq$ and $\preceq$ denote inequalities that hold up to absolute constants. Based on the above theorem, we make several observations about Task Arithmetic. First, data heterogeneity $\zeta_*^2$ degrades the performance of Task Arithmetic. The term $(H\zeta_*^2B^4)^{1/3}$ is a non-vanishing error term introduced by $\zeta_*^2$, highlighting the negative impact of data heterogeneity. 


Second, the one-shot learning nature of Task Arithmetic presents challenges that limit its performance. Notably, another non-vanishing term in Theorem \ref{convergence_rate_os}, $HB^2$, arises due to the one-shot learning setup. In contrast, for FedAvg with $R$ communication rounds, this term becomes $\frac{HB^2}{R}$ and diminishes as the number of communication rounds $R$ increases. Moreover, although both $\frac{(H\sigma^2B^4)^{1/3}}{K^{1/3}}$ and $\frac{\sigma B}{\sqrt{TK}}$ decrease as the number of local steps $K$ grows, they decay much more slowly in the one-shot setting compared to $R$ rounds of FedAvg. With multiple communication rounds, these terms are given by $\frac{(H\sigma^2B^4)^{1/3}}{K^{1/3}R^{2/3}}$ and $\frac{\sigma B}{\sqrt{TKR}}$ respectively \citep{patel2024limits}. This underscores the additional challenges introduced by the one-shot learning paradigm.

Third, starting with a good pre-trained model is important. The influence of pre-training is captured by the term $B$ introduced in Assumption \ref{init_assumption}. This quantity $B$ is a critical factor as it appears in every error term, particularly in the non-vanishing term $(H\zeta_*^2 B^4)^{1/3}$. Starting with a well-suited pre-trained model that has a smaller $B$ significantly mitigates the adverse effects of high data heterogeneity $\zeta_*^2$, as a smaller $B$ counteracts the heterogeneity. In fact, the significance of pre-trained models has been observed in experiments of both Task Arithmetic \citep{NEURIPS2023_d28077e5} and Federated Learning \citep{nguyen2022begin, chen2022importance}. 


\begin{remark}The scaling coefficient $\lambda$ is important. As mentioned before, $\lambda = \frac{\beta}{T}$, meaning that the scaling coefficient depends directly on the outer step size. Although in this section we assume $\beta = 1$ which yields $\lambda = \frac{1}{T}$ and reduces Task Arithmetic to model averaging for simplicity, proper tuning of the scaling coefficient $\lambda$ is essential. Research indicates that the choice of $\beta$ has a significant impact on FedAvg performance \citep{patel2024limits, karimireddy2020scaffold, charles2020outsized, jhunjhunwala2023fedexp, malinovsky2023server, li2023revisiting}. A similar sensitivity to $\lambda$ has been observed in Task Arithmetic: the performance of the final model depends heavily on $\lambda$. For instance, Figure 15 in \citet{ilharco2023editing} illustrates how Task Arithmetic’s performance can vary dramatically with changes to $\lambda$.
\end{remark}

\subsection{Training Heterogeneity}\label{sys_het_in_ta} 
This subsection examines the effect of training heterogeneity on Task Arithmetic, which we define as follows:
\begin{definition}(Training Heterogeneity)
    In Task Arithmetic, training heterogeneity refers to the phenomenon where different task objective functions $L_t$ are optimized with varying local learning rates and numbers of iterations during local training. 
\end{definition}
In the previous section, we assumed that all task objective functions are optimized with a fixed number of iterations $K$ and constant learning rate $\eta$. However, in practice, each task objective function is often optimized with different hyperparameter settings, which introduces training heterogeneity and can lead to objective inconsistency \citep{wang2020tackling}. Now, we extend the setting in Section \ref{data_het_in_ta} to consider each task objective function $L_t$ being optimized with distinct hyperparameters $\eta_t^{(k)}$ and $K_t$. We adopt notation and apply theoretical insights from \citet{wang2020tackling} to illustrate the impact of training heterogeneity. 

First, let $g_t$ be the stochastic gradient of $L_t$, and we define the following matrix of stochastic gradients for each task $t$
\[G_t = [g_t(\theta_t^{(0)}) \quad g_t(\theta_t^{(1)})\quad\dots\quad g_t(\theta_t^{(K_t-1)})]\in \mathbb{R}^{d\times K_t}.\]
Next we define the vector of normalized learning rates for each task $t$ as
\[a_t = [\frac{\eta_t^{(0)}}{\eta}, \frac{\eta_t^{(1)}}{\eta}, \dots,\frac{\eta_t^{(K_t-1)}}{\eta} ]^T\in \mathbb{R}^{K_t}\]
where $\eta$ is a constant used to normalize the learning rates, whose purpose will be specified later. Using this notation, we can rewrite equation (\ref{theta_TA}) for $\theta_{TA}$ as follows:
\begin{align}\label{obj_inconsistency_update}
    \theta_{TA}&=\theta_0-\lambda \sum_{t=1}^T\eta G_ta_t = \theta_0-\frac{\beta}{T}\sum_{t=1}^T\eta\|a_t\|_1\frac{G_ta_t}{\|a_t\|_1}
\end{align}
where the second equality follows from $\lambda = \frac{\beta}{T}$ as mentioned in Section \ref{ta_is_os_fl}.
Next, we denote $\tau_{\text{eff}} = \frac{\beta}{T}\sum_{t=1}^T\|a_t\|_1$ as the effective number of steps which measures the average number of updates accumulated using the constant learning rate $\eta$, and denote $w_t=\frac{\|a_t\|_1}{\sum_{s=1}^T\|a_s\|_1}$
as the aggregation weight for task $t$. Then
we can further rewrite equation (\ref{obj_inconsistency_update}) as 
\begin{align}
    \theta_{TA} =\theta_0-\tau_{\operatorname{eff}}\sum_{t=1}^T\eta w_t\frac{G_ta_t}{\|a_t\|_1}.
\end{align}

Notice the weight coefficients vector $[w_1;w_2;\dots; w_T]$ differs from the original uniform coefficients $[\frac{1}{T};\frac{1}{T};\dots;\frac{1}{T}]$ in the objective function $L$ from equation (\ref{global_obj_function}). This discrepancy is caused by training heterogeneity, and leads FedAvg with multiple communication rounds to converge to the stationary point of a different objective function $\tilde{L}(\theta): = \sum_{t=1}^Tw_tL_t(\theta)$ which is inconsistent with the original objective function $L$. While Task Arithmetic involves only one round of FedAvg, the inconsistency still remains due to training heterogeneity. Formally, we present the following assumptions and adapt Theorems 1 and 2 from \citet{wang2020tackling} to illustrate this inconsistency in our setting.
\begin{assumption}\label{obj_inconsistency_smoothness_assumption} (Smoothness) Assume all the task objective functions $L_t$ are $H$-smooth. That is, $\forall t\in [T]$ and $\forall \theta,\varphi\in\mathbb{R}^d$,
$\|\nabla L_t(\theta)-\nabla L_t(\varphi)\|\le H \|\theta-\varphi\|.$
\end{assumption}
\begin{assumption}\label{bounded_gradient_het}(Bounded Gradient Heterogeneity) For any set of weights $\{w_t\}_{t=1}^T$ such that $\sum_{t=1}^Tw_t = 1$, there exist constants $\alpha$ and $\zeta$ such that $\forall \theta\in \mathbb{R}^d$, 
\[\sum_{t=1}^Tw_t\|\nabla L_t(\theta)\|^2\le \alpha^2 \|\sum_{t=1}^Tw_t\nabla L_t(\theta)\|^2+\zeta^2.\]
\end{assumption}
\begin{remark} Notice that Assumption \ref{bounded_gradient_het} imposes a more restrictive condition on data heterogeneity compared to Assumption \ref{het_assumption}. Currently, no unified notion of data heterogeneity exists for Federated Learning. Since this section focuses on training heterogeneity, we adopt this more restrictive notion of data heterogeneity, as done in \citet{wang2020tackling}, to facilitate theoretical development. 
\end{remark}
\begin{theorem}(Theorem 1 and 2 from \citep{wang2020tackling})\label{thm_bound_on_obj_inconsistency}
    Consider $\theta_{TA}$ from update rule (\ref{obj_inconsistency_update}). Denote $\tilde{L}(\theta) = \sum_{t=1}^Tw_tL_t(\theta)$ and $\Bar{K} = \frac{1}{T}\sum_{t=1}^TK_t$. Let $\eta = \sqrt{T/\Bar{K}}$. Under Assumption \ref{stochastic_assumption},\ref{obj_inconsistency_smoothness_assumption} and \ref{bounded_gradient_het}, we have the following bound on the gradient norm $\|\nabla \tilde{L}(\theta_{TA})\|^2$:
    \begin{align}\label{bound_on_inconsistent_obj}
        \mathbb{E}[\|\nabla \tilde{L}(\theta_{TA})\|^2]\le \frac{4(\tilde{L}(\theta_0)-\tilde{L}_{\inf})(\Bar{K}/\tau_{\text{eff}})}{\sqrt{T\Bar{K}}}+\frac{4H\sigma^2A_1}{\sqrt{T\Bar{K}}}+\frac{6TH^2\sigma^2A_2}{\Bar{K}}+\frac{12TH^2\zeta^2A_3}{\Bar{K}}.
    \end{align}
    Specifically, $\tilde{L}_{\inf} = \inf_{\theta}\tilde{L}(\theta)$, $A_1 = \tau_{\text{eff}}T\sum_{t=1}^T\frac{w_t^2\|a_t\|_2^2}{\|a_t\|_1^2}$, $A_2 = \sum_{t=1}^T w_t(\|a_t\|_2^2- a_{t, -1}^2)$ and $A_3 = \max_t\{\|a_t\|_1(\|a_t\|_1-a_{t, -1})\}$ where $a_{t, -1}$ denotes the last coordinate of the vector $a_t$. Denote the RHS of inequality (\ref{bound_on_inconsistent_obj}) as $\epsilon$. Moreover, we have the following bound on gradient norm $\|\nabla L(\theta_{TA})\|^2$:
    \begin{align}\label{bound_on_original_obj}
        \mathbb{E}[\|\nabla L(\theta_{TA})\|^2]\le 2[\chi^2_{p||w}(\alpha^2-1)+1]\epsilon+2\chi^2_{p||w}\zeta^2
    \end{align}
    where $\chi^2_{p||w} = \sum_{t=1}^T\frac{(\frac{1}{T}-w_t)^2}{w_t}$ is the chi-square divergence between the weight coefficient vectors $p = [\frac{1}{T}\;\frac{1}{T}\;\dots\;\frac{1}{T}]$ and $w = [w_1\;w_2\;\dots\; w_T]$. 
\end{theorem}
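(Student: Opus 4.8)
The plan is to treat the one-shot Task Arithmetic update of equation~(\ref{obj_inconsistency_update}) as a single communication round ($R=1$) of the normalized-aggregation scheme of \citet{wang2020tackling}, whose aggregation weights are the $w_t=\|a_t\|_1/\sum_s\|a_s\|_1$ and whose effective step count is $\tau_{\text{eff}}$. The two displayed bounds then correspond exactly to Theorems 1 and 2 of \citet{wang2020tackling} specialized to $R=1$: the first is a smoothness-based descent estimate on the surrogate objective $\tilde L=\sum_t w_t L_t$, and the second transfers that estimate to the original objective $L=\frac1T\sum_t L_t$ through the chi-square divergence $\chi^2_{p||w}$. I would first verify that the rewritten update $\theta_{TA}=\theta_0-\tau_{\text{eff}}\sum_t \eta w_t G_t a_t/\|a_t\|_1$ is literally the normalized-averaging step, so that each local direction $d_t:=G_t a_t/\|a_t\|_1$ is a convex combination of the local stochastic gradients $g_t(\theta_t^{(k)})$ with weights proportional to $\eta_t^{(k)}$.

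For the first bound I would apply the $H$-smoothness inequality (Assumption~\ref{obj_inconsistency_smoothness_assumption}) to $\tilde L$ across the single step from $\theta_0$ to $\theta_{TA}$, giving $\tilde L(\theta_{TA})\le \tilde L(\theta_0)-\tau_{\text{eff}}\eta\langle\nabla\tilde L(\theta_0),\sum_t w_t d_t\rangle+\tfrac{H}{2}\tau_{\text{eff}}^2\eta^2\|\sum_t w_t d_t\|^2$. Taking conditional expectations and using unbiasedness with bounded variance (Assumption~\ref{stochastic_assumption}), the inner-product term produces $-\|\nabla\tilde L(\theta_0)\|^2$ up to a \emph{client-drift} correction measuring the gap between $\nabla L_t(\theta_t^{(k)})$ and $\nabla L_t(\theta_0)$, while the quadratic term splits into a mean part and a noise part. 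The drift is controlled by unrolling the local SGD recursion and repeatedly invoking smoothness together with the bounded gradient heterogeneity (Assumption~\ref{bounded_gradient_het}); the per-step bookkeeping is exactly what assembles the constants $A_1=\tau_{\text{eff}}T\sum_t w_t^2\|a_t\|_2^2/\|a_t\|_1^2$ (noise amplification of the normalized average), $A_2=\sum_t w_t(\|a_t\|_2^2-a_{t,-1}^2)$ (variance-driven drift), and $A_3=\max_t\{\|a_t\|_1(\|a_t\|_1-a_{t,-1})\}$ (heterogeneity-driven drift), where the $-a_{t,-1}$ corrections reflect that the final local step induces no further drift. Rearranging and substituting $\eta=\sqrt{T/\bar K}$ balances the terms and yields the stated right-hand side $\epsilon$.

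For the second bound I would write $\nabla L-\nabla\tilde L=\sum_t(\tfrac1T-w_t)\nabla L_t$ and exploit $\sum_t(\tfrac1T-w_t)=0$ to recenter as $\nabla L-\nabla\tilde L=\sum_t(\tfrac1T-w_t)(\nabla L_t-\nabla\tilde L)$. Weighted Cauchy--Schwarz then gives $\|\nabla L-\nabla\tilde L\|^2\le \chi^2_{p||w}\sum_t w_t\|\nabla L_t-\nabla\tilde L\|^2$, and the bias--variance identity $\sum_t w_t\|\nabla L_t-\nabla\tilde L\|^2=\sum_t w_t\|\nabla L_t\|^2-\|\nabla\tilde L\|^2$ combined with Assumption~\ref{bounded_gradient_het} bounds the right-hand side by $(\alpha^2-1)\|\nabla\tilde L\|^2+\zeta^2$. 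Finally $\|\nabla L\|^2\le 2\|\nabla\tilde L\|^2+2\|\nabla L-\nabla\tilde L\|^2$, taking expectations and inserting the first bound's $\epsilon$ for $\mathbb{E}\|\nabla\tilde L(\theta_{TA})\|^2$, delivers $2[\chi^2_{p||w}(\alpha^2-1)+1]\epsilon+2\chi^2_{p||w}\zeta^2$; the $-1$ is precisely the gain from recentering at the weighted mean before applying Cauchy--Schwarz.

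The main obstacle is the drift analysis in the first bound: under heterogeneous iteration counts $K_t$ and learning-rate profiles $a_t$, one must track how each local stochastic trajectory deviates from $\theta_0$ and show that the accumulated noise and drift collapse precisely into $A_1,A_2,A_3$ rather than into looser surrogates. This is the technical heart of \citet{wang2020tackling}; the contribution here is primarily to confirm that the one-shot Task Arithmetic update matches their normalized-averaging template so that their estimates apply verbatim, whereas the second bound is an elementary chi-square and Cauchy--Schwarz manipulation requiring no new machinery.
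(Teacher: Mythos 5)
Your proposal takes essentially the same route as the paper, which offers no independent proof of this result but simply imports Theorems 1 and 2 of \citet{wang2020tackling} after verifying that the one-shot Task Arithmetic update coincides with their normalized-averaging template with weights $w_t=\|a_t\|_1/\sum_s\|a_s\|_1$ and effective step count $\tau_{\text{eff}}$; your reconstruction of the descent-lemma-plus-drift analysis for the first bound and your chi-square/Cauchy--Schwarz derivation of the second bound (including the bias--variance identity that produces the $\alpha^2-1$ factor) are exactly the arguments of that reference. The only caveat worth recording is one that the paper itself glosses over: the smoothness inequality applied across the single aggregated step, once rearranged, naturally bounds $\mathbb{E}\|\nabla\tilde L(\theta_0)\|^2$ (in the multi-round version, the minimum over round-start iterates), so identifying the left-hand side with $\mathbb{E}\|\nabla\tilde L(\theta_{TA})\|^2$ at the merged model requires an additional step that neither your sketch nor the paper supplies --- this is an imprecision inherited from the paper's adaptation rather than a flaw you introduced.
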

The theorem above illustrates how heterogeneous local training affects the convergence of Task Arithmetic to a stationary point of $L$. The convergence rate in (\ref{bound_on_original_obj}) is influenced by two key factors: the term $\chi^2_{p||w}\zeta^2$ and the convergence rate to a stationary point of $\tilde L$ in (\ref{bound_on_inconsistent_obj}). For $\chi^2_{p||w}\zeta^2$, when different training processes are used for each objective function, $\chi^2_{p||w}$ is non-zero, resulting in $\chi^2_{p||w}\zeta^2$ as a persistent error term. With training heterogeneity, $\chi^2_{p||w}\zeta^2$ only vanishes if $\zeta^2 = 0$, indicating minimal data heterogeneity. This highlights the interaction between data heterogeneity and training heterogeneity: significant data heterogeneity exacerbates the negative effects of training heterogeneity, intensifying the overall performance degradation. 

When all task objective functions are optimized with the same number of iterations $K$ and a consistent learning rate schedule $\{\eta^{(0)}, \dots, \eta^{(K-1)}\}$, we have $w_t = \frac{1}{T}$. This yields $\chi_{p||w}^2 = 0$ and $L = \tilde L$, aligning the actual objective function $\tilde{L}$ being optimized with the original objective function $L$. In this scenario, objective inconsistency is effectively eliminated. The convergence rate in (\ref{bound_on_original_obj}) is reduced to (\ref{bound_on_inconsistent_obj}).
\begin{remark}
    In Theorem \ref{thm_bound_on_obj_inconsistency}, unlike in Theorem \ref{convergence_rate_os}, we make no assumptions about the convexity of the objective functions, which naturally results in a looser convergence rate. Since the primary focus of this paper is not on deriving a tighter convergence bound for non-convex settings, we limit our analysis to applying existing theoretical results to understand the behavior of Task Arithmetic.
\end{remark}

\section{Adapting Federated Learning Algorithms for Task Arithmetic}\label{selected_fl_algorithms}
In the previous section, we used insights from FedAvg to analyze how data and training heterogeneity negatively impact Task Arithmetic. In order to address these challenges, numerous algorithms have been developed to improve FedAvg for more efficient Federated Learning (see Section \ref{related_work} for related work).
Thus, we adapt some Federated Learning algorithms to solve heterogeneity challenges in Task Arithmetic for better model merging performance. Selecting the right Federated Learning algorithms to implement requires a clear understanding of key challenges that complicate the adaptation. In this section, we begin with analyzing several key challenges.
\subsection{Challenges in Adapting Federated Learning Algorithms for Task Arithmetic}\label{challenges_in_adaption}
First, the number of communication rounds is limited. Since Task Arithmetic is only one-shot Federated Learning, algorithms relying on multiple communication rounds are unsuitable. For instance, some Federated Learning algorithms add regularization terms to local objective functions \citep{li2020federated, durmus2021federated} to encourage local updates to remain close to the global model parameters transmitted from the previous communication round. However, in our one-shot setting, only the pre-trained model parameters $\theta_0$ are communicated. Applying such regularization would force each task’s fine-tuned parameters to remain close to $\theta_0$, potentially hindering both convergence and task-specific performance. Similarly, algorithms that address heterogeneity through iterative techniques such as variance reduction \citep{karimireddy2020scaffold, reddi2020adaptive} or adaptive step size tuning \citep{jhunjhunwala2023fedexp, li2023revisiting} require the accumulation and updating of certain metrics over multiple communication rounds. Since Task Arithmetic operates within a one-shot setting, implementing such iterative updates is impossible. 

Second, no additional training is allowed. To counteract the effects of heterogeneity or limited communication and data budget, some Federated Learning algorithms require additional training, imposing additional computational cost. For instance, \citet{ye2023feddisco} ask each device to learn metrics that compare local and global data through training, and these metrics are then used as additional scores for aggregation. \citet{zhang2022dense} train an additional generator to generate data which is subsequently used to train a global model in one-shot Federated Learning setting. 

Third, no additional datasets are available. Many Federated Learning algorithms rely on supplementary datasets, which is not feasible for modifying Task Arithmetic. In one-shot Federated Learning, a common approach to address data heterogeneity is knowledge distillation \citep{zhou2020distilled, guha2019one, li2020practical}. These methods often require access to extra datasets from which either local devices or the central server distills knowledge to improve model performance.

Given the constraints and unique needs for adapting Federated Learning algorithms, we identify the following three criteria for selecting Federated Learning algorithms: adaptability to one-shot setting, no additional training required and no access to additional datasets required. 




\subsection{Algorithms}\label{subsection_on_algorithms}
With criteria established above, we now explore four Federated Learning algorithms FedNova \citep{wang2020tackling}, FedGMA \citep{tenison2022gradient}, Median \citep{yin2018byzantine} and CCLIP \citep{karimireddy2021learning} that align with these guidelines, explaining their motivations and how they modify Task Arithmetic. For more detailed explanation and further understanding of these algorithms, please refer to the original papers. 

\textbf{FedNova \citep{wang2020tackling}}\quad FedNova addresses objective inconsistency caused by training heterogeneity by replacing the heterogeneous weight vector $[w_1 \;w_2\;\dots\; w_T]$ with the uniform weight vector $[\frac{1}{T} \;\frac{1}{T}\;\dots\;\frac{1}{T}]$, ensuring consistent weighting across tasks. This approach adapts easily to the one-shot setting. Using the notation from Section \ref{sys_het_in_ta}, FedNova modifies the Task Arithmetic update as:
\begin{align}\label{fednova_alg}
    \theta_{TA}&=\theta_0-\tau_{\text{eff}}\sum_{t=1}^T\frac{\eta}{T}\frac{G_ta_t}{\|a_t\|_1} =  \theta_0+ \lambda(\frac{1}{T}\sum_{t=1}^T\|a_t\|_1)\sum_{t=1}^T\frac{\tau_t}{\|a_t\|_1}
\end{align}
where $\tau_{\text{eff}} = \frac{\beta}{T}\sum_{t=1}^T\|a_t\|_1$ is the effective number of steps defined in Section \ref{sys_het_in_ta}, $\tau_t = -\eta G_ta_t$ is the task vector, and $\lambda = \frac{\beta}{T}$ is the scaling coefficient. In other words, FedNova normalizes each task vector by $\|a_t\|_1$ and rescales the scaling coefficient by the average $\frac{1}{T}\sum_{t=1}^T\|a_t\|_1$. 

\textbf{FedGMA \citep{tenison2022gradient}}\quad FedGMA addresses data heterogeneity by mitigating sign conflicts among local updates. Such sign conflicts can cause information loss and slower convergence. FedGMA uses a gradient mask to reduce the impact of conflicting directions and preserves meaningful information. Specifically, it computes an agreement score $A$ to measure alignment across task vectors $\{\tau_t\}_{t=1}^T$. Based on a threshold $\rho$, FedGMA constructs a mask $\tilde M$ that emphasizes coordinates with strong agreement while reducing the influence of others. Formally, define
$ A = \bigg| \frac{1}{T}\sum_{t=1}^T\operatorname{sign}(\tau_t)\bigg|$ and $\tilde M_j =\begin{cases}
    1,& \text{if } A_j\ge \rho\\
    A_j,              & \text{otherwise}
    \end{cases}$ where $j$ denotes the $j$-th coordinate and $\operatorname{sign}(\cdot)$ is applied in a coordinate-wise manner. This yields
\begin{align}
    \theta_{TA} =\theta_0+\lambda \tilde M\odot \sum_{t=1}^T\tau_t.
\end{align}
\textbf{Median \citep{yin2018byzantine}}\quad Coordinate-Wise Median \citep{yin2018byzantine}, originally designed to handle adversarial updates in Federated Learning, is adapted here to address data and training heterogeneity in Task Arithmetic. Due to diverse data distributions or differing hyperparameter settings, some task vectors may have extreme values. By selecting the median value for each coordinate, this method reduces the influence of outliers while maintaining overall performance across tasks. It modifies Task Arithmetic as 
\begin{align}
    \theta_{TA} = \theta_0+\lambda \operatorname{med}(\tau_1, \dots, \tau_T)
\end{align}
where $\operatorname{med}(\cdot)$ computes the coordinate-wise median of $\{\tau_t\}_{t=1}^T$. 

\textbf{CCLIP \citep{karimireddy2021learning}}\quad CCLIP, short for centered clipping, is another widely applied robust aggregation method towards adversarial devices in Federated Learning. With the same motivation to using the Coordinate-Wise Median, we use CCLIP to reduce the impact of extreme task vectors. CCLIP is implemented with a predefined threshold $\rho$  and modifies Task Arithmetic as follows:
\begin{align}
    \theta_{TA} = \theta_0+\lambda\sum_{t=1}^T\tau_t\min\{1, \frac{\rho}{\|\tau_t\|}\}.
\end{align}
When the norm of a task vector $\|\tau_t\|$ exceeds the threshold $\rho$, this method identifies it as an outlier and shrinks its magnitude by a factor of $\frac{\rho}{\|\tau_t\|}$. 
\section{Experiments on Merging CLIP}\label{exp_on_mixing_clip}
In this section, we present and discuss our experimental results on CLIP-ViT-B-32 \citep{radford2021learning} for image classifications. We follow the same experimental paradigm as \citep{ilharco2023editing}. Specifically, we use CLIP-ViT-B-32 \citep{radford2021learning} as the pre-trained model and eight datasets: Cars \citep{krause20133d}, DTD \citep{cimpoi2014describing}, EuroSAT \citep{helber2019eurosat}, GTSRB \citep{stallkamp2011german}, MNIST \citep{lecun1998mnist}, RESISC45 \citep{cheng2017remote}, SUN397 \citep{xiao2016sun}, and SVHN \citep{netzer2011reading}, to construct eight task vectors.

In total, there are 247 ways to select $T$ different task vectors from these eight task vectors where $T\in [2, 8]$: $247 = \sum_{T=2}^8\binom{8}{T}$. For each algorithm, we therefore conduct 247 experiments. In each experiment, we merge $T$ selected task vectors and evaluate on the $T$ datasets corresponding to the task vectors used. Our evaluation metric is normalized accuracy \citep{ilharco2023editing}, defined as the test accuracy normalized by the fine-tuned model’s accuracy. That is, 
\[\text{normalized accuracy on task $t$} = \frac{\text{accuracy on task $t$}}{\text{accuracy of the fine-tuned model $t$ on task $t$}}.\]

\subsection{Experimental Results}\label{experimental_results_on_mixing_clip}
In the first part of the experiments (Section \ref{mixing_with_training_het}), to simulate practical conditions of training heterogeneity, we fine-tune CLIP-ViT-B-32 on each dataset using three learning rates $\{1\mathrm{e}{-4}, 1\mathrm{e}{-5}, 1\mathrm{e}{-6}\}$ and different numbers of iterations. Then we select the best fine-tuned checkpoints via cross-validation on validation datasets. Refer to Appendix \ref{additional_exp_details_clip} for further details on fine-tuning and cross-validation. 

In the second part of the experiments (Section \ref{clip_merging_without_training_het}), to better understand the impact of training heterogeneity, we use the task vectors provided by \cite{ilharco2023editing}, which were fine-tuned with the same number of iterations and the same learning rates, thereby eliminating training heterogeneity.
\subsubsection{Merging with training heterogeneity}\label{mixing_with_training_het}
We first report experimental results using task vectors fine-tuned with training heterogeneity. Table \ref{eight_tasks_acc_table} summarizes the performance of various methods in a specific experimental setup: merging all eight task vectors, corresponding to the scenario where $T=8$. We report the average normalized accuracy as well as the normalized accuracy for each dataset. Task Arithmetic is used as the baseline method for comparison. As shown in the table, all four adapted Federated Learning methods outperform the baseline by a substantial margin. Moreover, we observe that Median and CCLIP yield the most improvement. 


\begin{table}[ht]
\centering
\resizebox{0.9\textwidth}{!}{%
\begin{tabular}{@{}cccccccccc@{}}
\toprule
Methods &
  \begin{tabular}[c]{@{}c@{}}Average\\ Normalized\\ Accuracy\end{tabular} &
  DTD &
  EuroSAT &
  GTSRB &
  SUN397 &
  SVHN &
  MNIST &
  Cars &
  RESISC45 \\ \midrule
\begin{tabular}[c]{@{}c@{}}Task Arithmetic\end{tabular} &
  67.33 &
  57.43 &
  53.86 &
  41.00 &
  82.40 &
  \textbf{78.58} &
  87.76 &
  71.94 &
  65.72 \\ 
Median  & {\ul 74.55} ($\uparrow 7.22$) & \textbf{67.51} & \textbf{78.05} & {\ul 67.12}    & 84.02          & 56.69       & {\ul 91.32}    & \textbf{77.51} & \textbf{74.17} \\ 
FedNova & 69.57 ($\uparrow 2.24$) & 57.08          & 50.37          & 61.47          & \textbf{86.62} & {\ul 77.68} & 85.18          & 74.22          & 63.96          \\ 
FedGMA  & 68.55 ($\uparrow$ 1.22) & 60.01          & 58.69          & 45.02          & {\ul 84.13}    & 71.19       & 86.65          & 74.53          & 68.13          \\ 
CCLIP   & \textbf{74.82} ($\uparrow$ 7.49)& {\ul 66.76}    & {\ul 75.42}    & \textbf{73.87} & 83.18          & 58.51       & \textbf{92.03} & {\ul 76.40}    & {\ul 72.39}\\\bottomrule
\end{tabular}}
\caption{\textbf{Merging all eight task vectors using five different methods.} Each method is evaluated on eight datasets, with normalized accuracy reported for each dataset. The highest and second-highest normalized accuracy values for each dataset are highlighted in bold and underlined, respectively.}
\label{eight_tasks_acc_table}
\end{table}
 
\begin{table}[ht]
\centering
\resizebox{0.9\textwidth}{!}{%
\begin{tabular}{@{}cccc@{}}
\toprule
Methods &
  \begin{tabular}[c]{@{}c@{}}Percentage of Improved Experiments\\Compared to Task Arithmetic\end{tabular} &
  \begin{tabular}[c]{@{}c@{}}Percentage of Unchanged Experiments\\Compared to Task Arithmetic\end{tabular} &
  \begin{tabular}[c]{@{}c@{}}Percentage of Degraded Experiments\\Compared to Task Arithmetic\end{tabular} \\ \midrule
Median  & 67.61\% & 0\%     & 32.39\% \\ 
FedNova & 63.56\% & 0\%     & 36.44\% \\ 
FedGMA  & 40.49\% & 18.22\% & 41.29\% \\ 
CCLIP   & 91.50\% & 0\%     & 8.5\%   \\ \bottomrule
\end{tabular}}
\caption{\textbf{Percentage of improved, unchanged, and degraded experiments using different methods compared to Task Arithmetic.}}
\label{percentage_of_exps}
\end{table}
\begin{figure}[ht]
\begin{subfigure}{.5\textwidth}
  \centering

  \includegraphics[width = 0.8\linewidth]{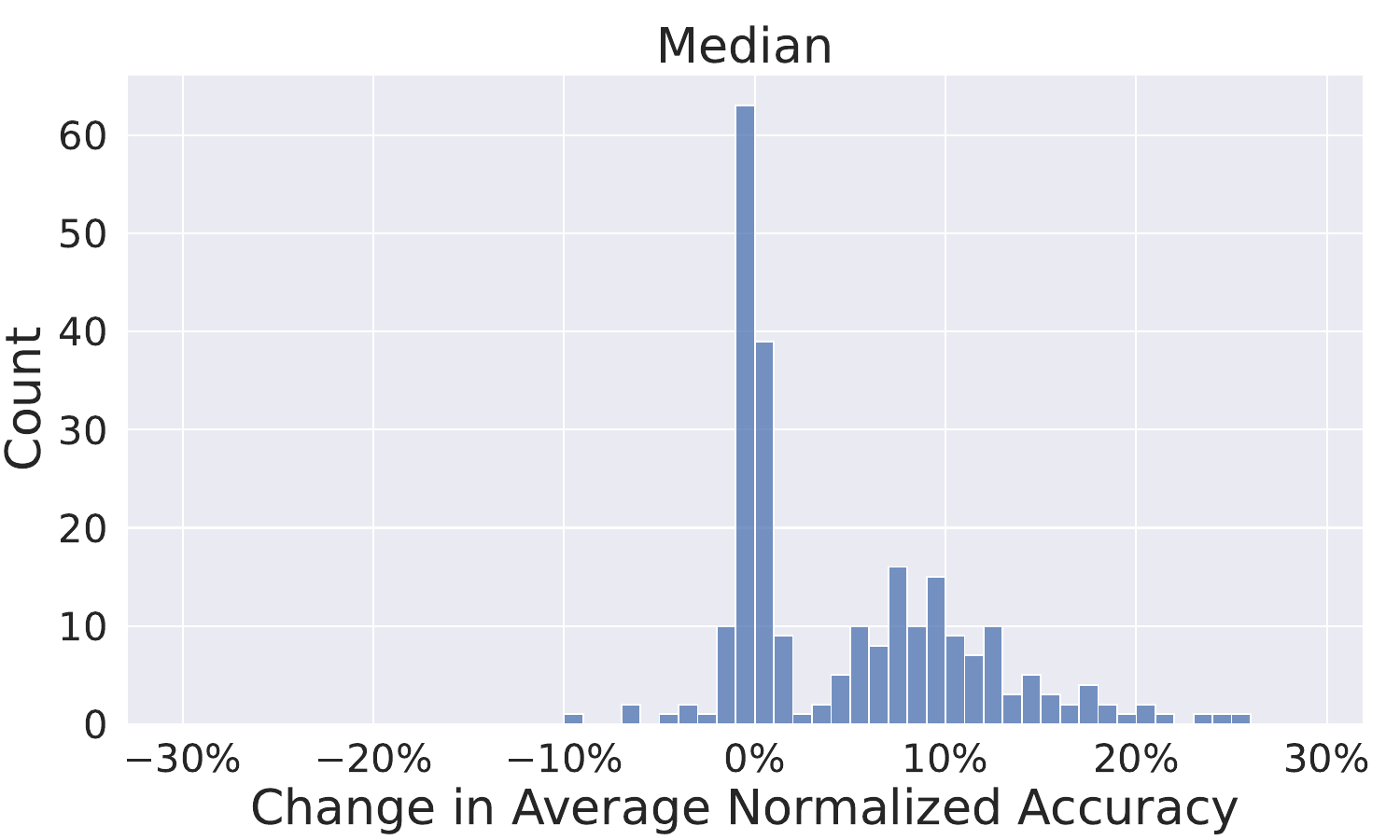}  
\end{subfigure}\hfill
\begin{subfigure}{.5\textwidth}
  \centering

  \includegraphics[width=0.8\linewidth]{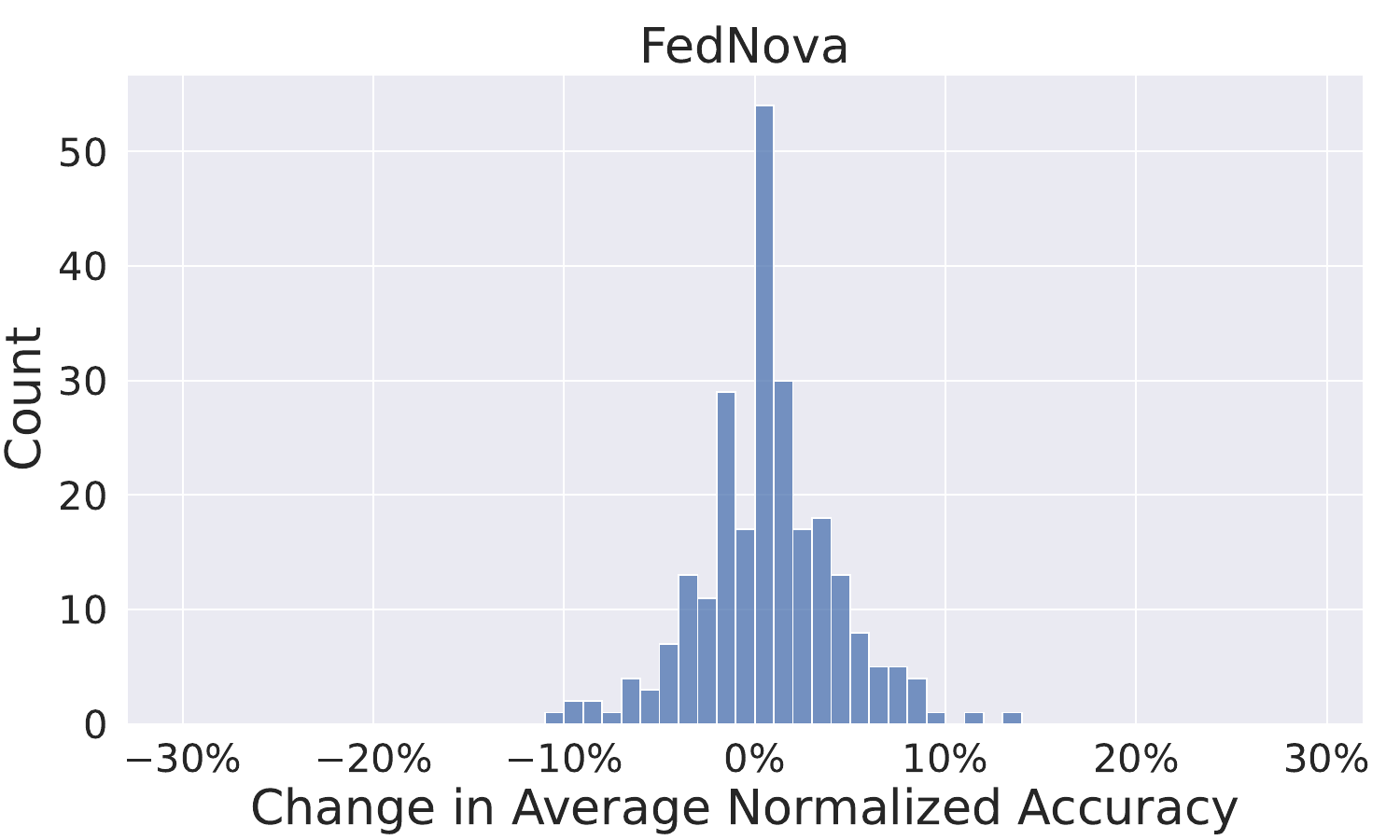}  
\end{subfigure}
\begin{subfigure}{.5\textwidth}
  \centering

  \includegraphics[width=0.8\linewidth]{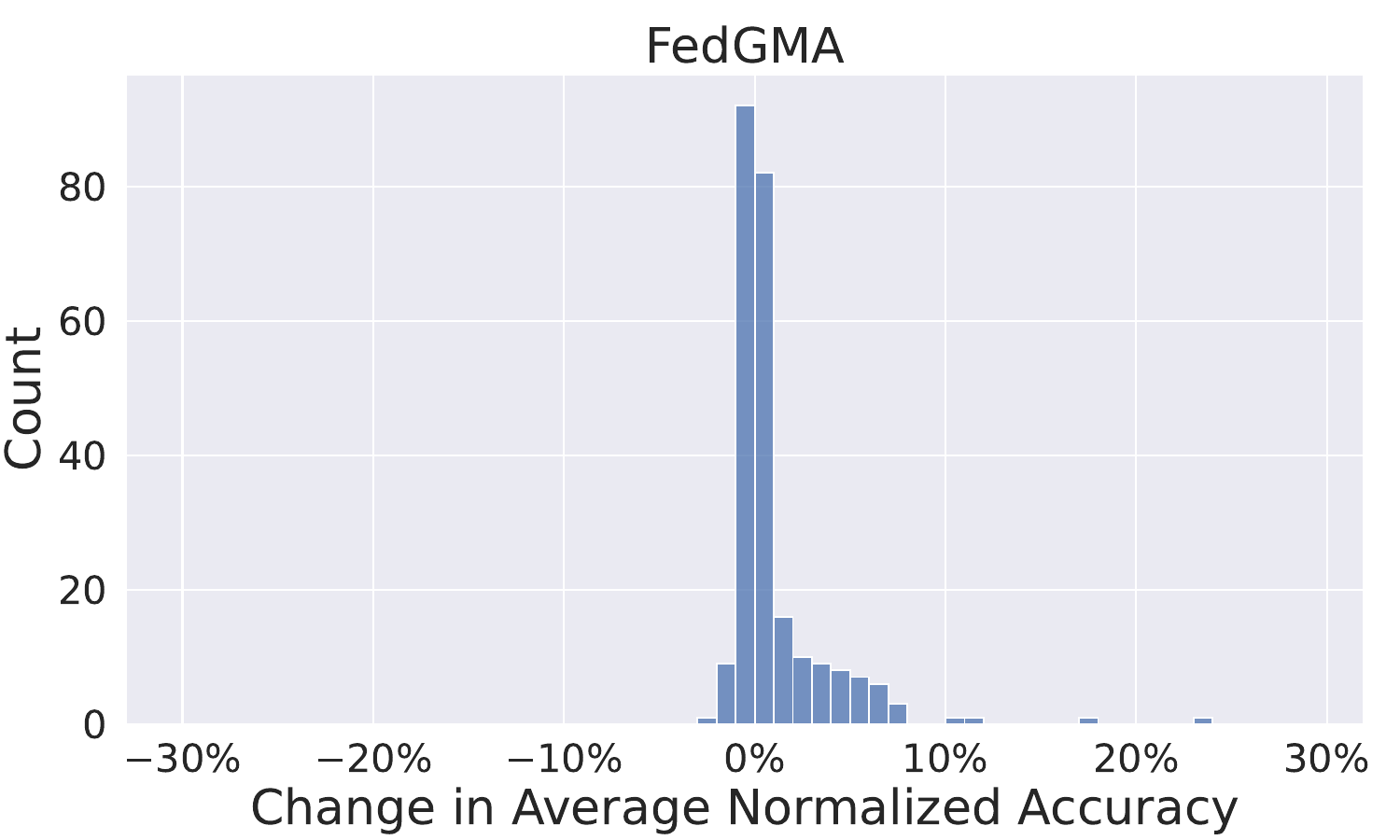}  
\end{subfigure}\hfill
\begin{subfigure}{.5\textwidth}
  \centering
  \includegraphics[width=0.8\linewidth]{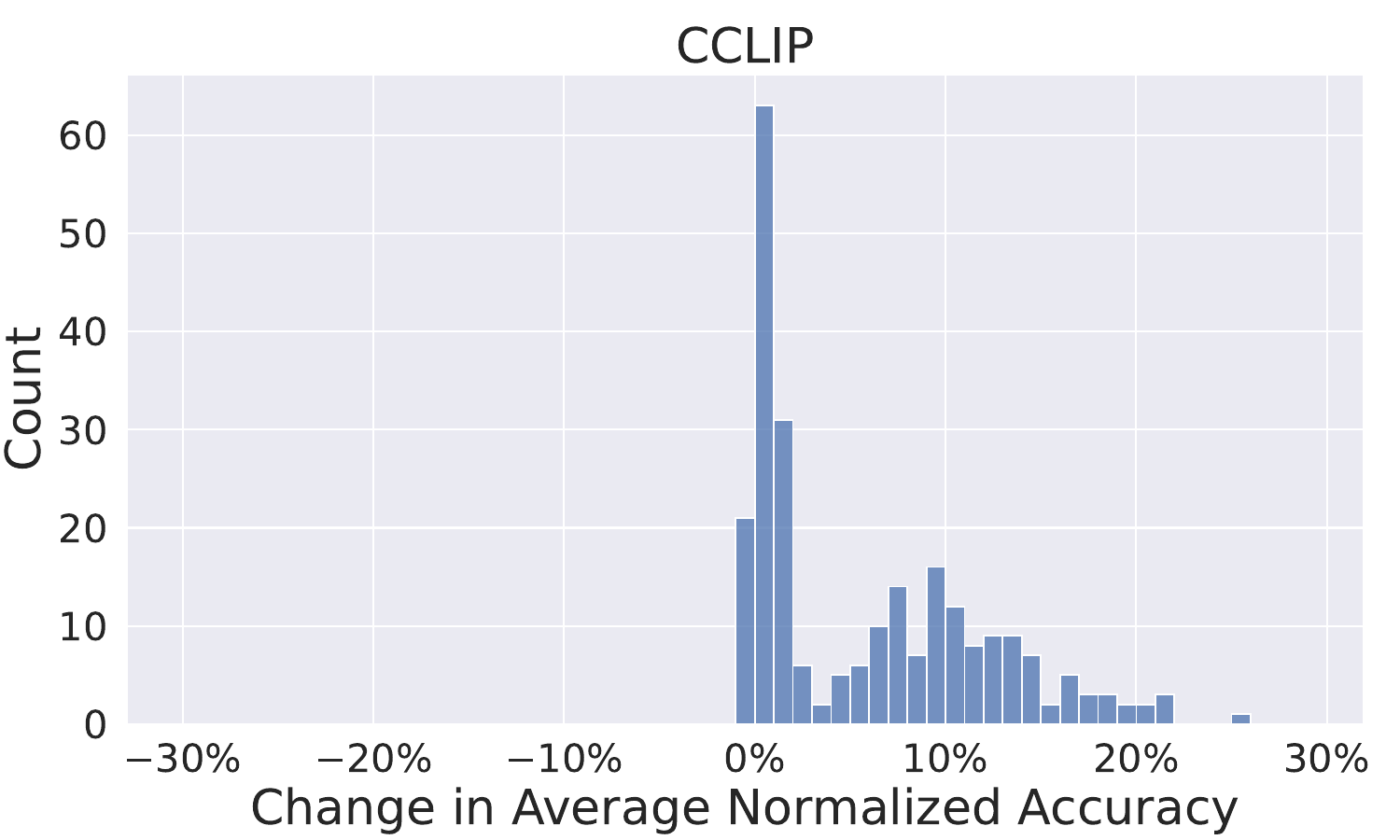}  
\end{subfigure}
\caption{\textbf{Histograms showing the change in average normalized accuracy for four different methods compared to Task Arithmetic.} The x-axis shows the difference in average normalized accuracy relative to Task Arithmetic, with positive values indicating improvement and negative values indicating degradation. The y-axis represents the number of experiments within each range of change values.}
\label{hist_for_four_algorithms}
\end{figure}

Table \ref{percentage_of_exps} and Figure \ref{hist_for_four_algorithms} summarize the performance comparison between Task Arithmetic and other Federated Learning algorithms across 247 experiments. In Table \ref{percentage_of_exps}, we report the percentage of 247 experiments in which the average normalized accuracy improves, remains unchanged, or degrades when using four Federated Learning methods compared to the baseline, Task Arithmetic. The average normalized accuracy is calculated by averaging over the number of task vectors being used. In order to better visualize the performance differences for each method, in Figure \ref{hist_for_four_algorithms}, we use histograms to show the frequencies of experiments within each range of change in average normalized accuracy. Median, FedNova, and CCLIP consistently show the ability to improve upon Task Arithmetic in most cases, while FedGMA typically demonstrates either no change or slight improvements.

\subsubsection{Merging without training heterogeneity}\label{clip_merging_without_training_het}
In Section \ref{sys_het_in_ta}, we analyzed how training heterogeneity causes objective inconsistency, degrading Task Arithmetic's performance. While in the experiments conducted by \cite{ilharco2023editing},  all task vectors were homogeneously fine-tuned using a consistent learning rate $1\mathrm{e}{-5}$, our approach in Section \ref{mixing_with_training_het} employs heterogeneous fine-tuning. We compare the performance of Task Arithmetic on these two sets of task vectors to validate our theoretical findings in Section \ref{sys_het_in_ta}.

Table \ref{het_vs_hom_ta_acc} compares Task Arithmetic's performance on heterogeneously and homogeneously fine-tuned task vectors when merging all eight task vectors. Again we report the average normalized accuracy and the normalized accuracy for each dataset. As evident from the table, Task Arithmetic with homogeneous fine-tuning consistently outperforms its heterogeneous counterpart across all datasets, except for SUN397.

Table \ref{hom_vs_het_tv_percentage} and Figure \ref{hist_for_homo_vs_het_training} compare Task Arithmetic's performance on homogeneously and heterogeneously fine-tuned task vectors across 247 experiments. Homogeneous fine-tuning outperforms heterogeneous fine-tuning in $92.31\%$ of cases, as shown in Table \ref{hom_vs_het_tv_percentage}. Moreover, Figure \ref{hist_for_homo_vs_het_training} shows that homogeneous fine-tuning can improve the average normalized accuracy by up to more than $30\%$. These results highlight the significant negative impact of training heterogeneity on the performance of Task Arithmetic. 
\begin{table}[h]
\centering
\resizebox{0.9\textwidth}{!}{%
\begin{tabular}{@{}cccccccccc@{}}
\toprule
Methods &
  \begin{tabular}[c]{@{}c@{}}Average\\ Normalized\\ Accuracy\end{tabular} &
  DTD &
  EuroSAT &
  GTSRB &
  SUN397 &
  SVHN &
  MNIST &
  Cars &
  RESISC45 \\ \midrule
\begin{tabular}[c]{@{}c@{}}Task Arithmetic with \\ Heterogeneous Fine-Tuning\end{tabular} &
  67.33 &
  57.43 &
  53.86 &
  41.00 &
  82.40 &
  78.58 &
  87.76 &
  71.94 &
  65.72 \\ \midrule
\begin{tabular}[c]{@{}c@{}}Task Arithmetic with \\ Homogeneous Fine-Tuning\end{tabular} &
  77.34 ($\uparrow$ 10.01)&
  64.90&
  77.93 &
  69.47 &
  80.64 &
  80.26 &
  96.42 &
  75.98 &
  73.01 \\ \bottomrule
\end{tabular}}
\caption{\textbf{Using Task Arithmetic to merge eight task vectors from heterogeneous and homogeneous fine-tuning processes.} Each method is evaluated on eight datasets, with normalized accuracy reported for each dataset.}
\label{het_vs_hom_ta_acc}
\end{table}
\begin{table}[h]
\centering
\resizebox{0.8\textwidth}{!}{%
\begin{tabular}{@{}cccc@{}}
\toprule
 &
  \begin{tabular}[c]{@{}c@{}}Percentage of \\ Improved Experiments\end{tabular} &
  \begin{tabular}[c]{@{}c@{}}Percentage of\\ Unchanged Experiments\end{tabular} &
  \begin{tabular}[c]{@{}c@{}}Percentage of\\ Degraded Experiments\end{tabular} \\ \midrule
\begin{tabular}[c]{@{}c@{}}Task Arithmetic with \\ Homogeneous Fine-Tuning\end{tabular} &
  92.31\% &
  0\% &
  7.69\% \\ \bottomrule
\end{tabular}}
\caption{\textbf{Percentage of improved, unchanged, and degraded experiments using task vectors with homogeneous fine-tuning process compared to those with heterogeneous fine-tuning process.}}
\label{hom_vs_het_tv_percentage}
\end{table}
\begin{figure}[h!]
  \centering
  \includegraphics[scale = 0.3]{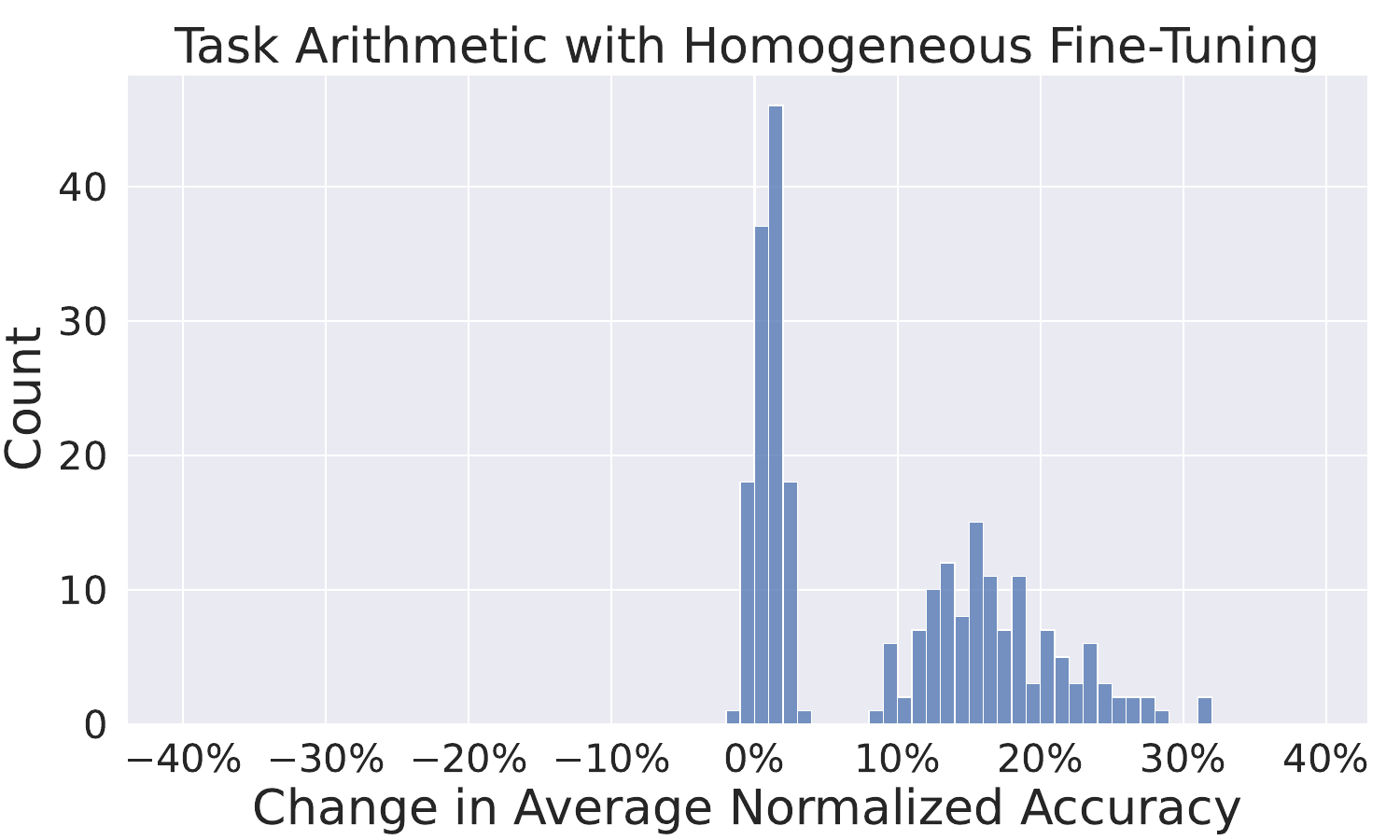} 
  \caption{\textbf{Histogram showing the change in average normalized accuracy when using task vectors from homogeneous fine-tuning compared to heterogeneous fine-tuning.} The x-axis shows the difference in average normalized accuracy relative to Task Arithmetic, with positive values indicating improvement and negative values indicating degradation. The y-axis represents the number of experiments within each range of change values.}
  \label{hist_for_homo_vs_het_training}
\end{figure}
\subsection{Discussion on Experimental Results}\label{discussion_on_clip_experiments}
We now discuss a key observation from our experimental results: in practice, training heterogeneity poses a greater challenge than data heterogeneity for Task Arithmetic.

While Section \ref{data_het_in_ta} highlights how data heterogeneity degrades Task Arithmetic, our experimental results in Section \ref{experimental_results_on_mixing_clip} show it is less problematic compared to training heterogeneity. First, FedNova, designed to address training heterogeneity, consistently outperforms Task Arithmetic more frequently and significantly than FedGMA which targets data heterogeneity. Table \ref{percentage_of_exps} and Figure \ref{hist_for_four_algorithms} demonstrate that FedNova both improves performance more frequently and achieves greater overall gains than FedGMA.

Second, among Federated Learning algorithms, CCLIP and Median demonstrate the best performance. As discussed in Section \ref{selected_fl_algorithms}, these methods are designed for robust aggregation in the presence of outliers. In our setting, they effectively address training heterogeneity which causes certain task vectors to have disproportionately large norms and behave like outliers. For example, the cross-validation process selects a much larger learning rate of $1\mathrm{e}{-4}$ for SVHN, compared to $1\mathrm{e}{-5}$ used for other datasets. This hyperparameter setup results in the SVHN task vector having a significantly larger norm (reported in Appendix \ref{tv_norm_clip}), making it an outlier that negatively impacts the merged model's performance on other tasks when using Task Arithmetic. By employing robust aggregation methods like Median and CCLIP, we reduce the influence of the SVHN task vector, which improves the merged model’s performance on other tasks.

Third, when comparing Table \ref{percentage_of_exps} and Table \ref{hom_vs_het_tv_percentage}, we see that homogeneous fine-tuning leads to more frequent improvements over Task Arithmetic compared to the other four algorithms Median, FedNova, FedGMA and CCLIP. Similarly, Figure \ref{hist_for_homo_vs_het_training} demonstrates that homogeneous fine-tuning results in the most frequent and substantial positive changes in average normalized accuracy. 

Further evidence is presented in Appendix \ref{baseline_checkpoints_exp_clip}, where we evaluate the performance of Median, FedGMA and CCLIP on task vectors generated via homogeneous fine-tuning. Using the performance of Task Arithmetic on these homogeneously fine-tuned task vectors as the baseline, we find that Federated Learning algorithms rarely improve upon the baseline. In fact, Task Arithmetic consistently emerges as the best-performing approach when merging those task vectors generated without training heterogeneity. This reinforces our observation that training heterogeneity is a more significant issue than data heterogeneity in practice.

\section{Experiments on Merging LLMs}\label{exp_on_mixing_llms}
We now present and discuss our experimental results on merging LLMs for three tasks: instruction following, mathematical reasoning, and code generation. We follow the experimental paradigm of \citet{yu2024language}. We merge task vectors from three models—WizardLM-13B \citep{xu2023wizardlm}, WizardMath-13B \citep{luo2023wizardmath}, and Llama-2-13B-Code-Alpaca \citep{codealpaca}—for instruction following, mathematical reasoning, and code generation, respectively. All three models are fine-tuned from Llama2-13B \citep{touvron2023llama}. For instruction following, we evaluate the models on AlpacaEval \citep{alpaca_eval}. For mathematical reasoning, we use GSM8K \citep{cobbe2021training} and MATH \citep{hendrycks2021measuring}. For code generation, we use HumanEval \citep{chen2021evaluating} and MBPP \citep{austin2021program}. Performance metrics include win rate for AlpacaEval, zero-shot accuracy for GSM8K and MATH, and pass@1 for HumanEval and MBPP.

Since all models used in this experiment are downloaded from HuggingFace, we do not have access to their fine-tuning hyperparameter settings. As a result, FedNova cannot be applied in this experiment because it requires knowledge of learning rates and the number of iterations, which are unavailable. Furthermore, when implementing Median, taking the median of two vectors reduces to averaging, which is equivalent to Task Arithmetic. Thus, we implement Median only for merging three task vectors, with the corresponding results deferred to Appendix \ref{experiment_on_llm_median}. For additional details on experiments, please refer to Appendix \ref{experiment_details_llms}.

\subsection{Experimental Results}
In Table \ref{acc_on_mixing_llms}, we compare the performance of three methods: Task Arithmetic, FedGMA, and CCLIP. The results show that when merging two out of three task vectors, FedGMA and CCLIP often outperform Task Arithmetic. However, when merging all three task vectors, Task Arithmetic shows superior performance on code generation and instruction-following tasks. Notably, Task Arithmetic consistently excels in instruction-following tasks, achieving either the highest accuracy or accuracy comparable to the other methods.

\begin{table}[h]
\centering
\resizebox{0.7\textwidth}{!}{%
\begin{tabular}{@{}ccccccc@{}}
\toprule
\multirow{2}{*}{Tasks} &
  \multirow{2}{*}{Methods} &
  \multicolumn{2}{c}{\begin{tabular}[c]{@{}c@{}}Mathematical\\ Reasoning\end{tabular}} &
  \multicolumn{2}{c}{\begin{tabular}[c]{@{}c@{}}Code\\ Generation\end{tabular}} &
  \begin{tabular}[c]{@{}c@{}}Instruction\\ Following\end{tabular} \\ \cmidrule(lr){3-4}\cmidrule(lr){5-6}\cmidrule(lr){7-7}
 &
   &
  GSM8K &
  MATH &
  HumanEval &
  MBPP &
  AlpacaEval \\ \midrule
\multirow{3}{*}{\begin{tabular}[c]{@{}c@{}}Math\\ Code\end{tabular}} &
  Task Arithmetic &
  64.22 &
  \textbf{14.1} &
  1.22 &
  8.66 &
  / \\
 &
  FedGMA &
  65.5 &
  12.66 &
  \textbf{15.85} &
  \textbf{21.8} &
  / \\
 &
  CCLIP &
  \textbf{65.81} &
  13.48 &
  4.27 &
  7.6 &
  / \\ \midrule
\multirow{3}{*}{\begin{tabular}[c]{@{}c@{}}Instruction\\ Math\end{tabular}} &
  Task Arithmetic &
  65.88 &
  13.32 &
  / &
  / &
  69.96 \\
 &
  FedGMA &
  \textbf{66.72} &
  \textbf{14.48} &
  / &
  / &
  62.04 \\
 &
  CCLIP &
  64.75 &
  13.18 &
  / &
  / &
  \textbf{69.99} \\ \midrule
\multirow{3}{*}{\begin{tabular}[c]{@{}c@{}}Instruction\\ Code\end{tabular}} &
  Task Arithmetic &
  / &
  / &
  \textbf{32.32} &
  32.2 &
  \textbf{79.76} \\
 &
  FedGMA &
  / &
  / &
  20.12 &
  26 &
  49.55 \\
 &
  CCLIP &
  / &
  / &
  \textbf{32.32} &
  \textbf{34.2} &
  76.02 \\ \midrule
\multirow{3}{*}{\begin{tabular}[c]{@{}c@{}}Instruction\\ Math\\ Code\end{tabular}} &
  Task Arithmetic &
  58.45 &
  12.06 &
  \textbf{25.16} &
  \textbf{31} &
  \textbf{70.89} \\
 &
  FedGMA &
  57.16 &
  11.96 &
  20.12 &
  27.4 &
  64.13 \\
 &
  CCLIP &
  \textbf{62.93} &
  \textbf{12.96} &
  20.12 &
  27.6 &
  66.91 \\ \bottomrule
\end{tabular}}
\caption{\textbf{Performance of merging LLMs.} The best performance for each dataset is highlighted in bold.}
\label{acc_on_mixing_llms}
\end{table}
\subsection{Discussion on Experimental Results}\label{training_het_from_tuning_methods}
In this section, we discuss a key observation from our experimental results: training heterogeneity arises not only from differences in hyperparameters but also from variations in tuning methods. 

In Section \ref{sys_het_in_ta}, we theoretically analyzed how using different learning rates and number of iterations creates training heterogeneity and thus leads to objective inconsistency. However, our experimental results in Section \ref{exp_on_mixing_llms} reveal that employing different fine-tuning methods further exacerbates training heterogeneity. Specifically, in our experiments, the Llama-2-13B-Code-Alpaca model is fine-tuned using QLoRA \citep{dettmers2024qlora}, a parameter-efficient fine-tuning (PEFT) approach.

PEFT adjusts only a small subset of parameters while leaving the rest unchanged \citep{han2024parameter}. Consequently, task vectors generated by PEFT typically have smaller norms compared to those generated through standard fine-tuning. In our experiments, Llama-2-13B-Code-Alpaca, which is fine-tuned for code generation using PEFT, produces a task vector with a notably small norm of 5.05. In contrast, WizardLM-13B and WizardMath-13B, fine-tuned for instruction following and mathematical reasoning via standard fine-tuning, generate task vectors with much larger norms of 142.61 and 52.62, respectively. This discrepancy can pose challenges when merging task vectors. Simply regulating the behavior of task vectors with larger norms can lead to unintended negative effects. As shown in Table \ref{acc_on_mixing_llms}, when merging tasks include both instruction following (large norm) and code generation (small norm), FedGMA and CCLIP either fail to outperform Task Arithmetic or achieve comparable performance on these two tasks. This highlights that addressing training heterogeneity by focusing solely on differences in hyperparameters is insufficient in practice.


While some studies have explored the challenges of merging large models fine-tuned via PEFT \citep{zhang2023composing, wu2024mixture}, merging PEFT-generated task vectors with those produced by standard fine-tuning remains an open research question. Further investigation is required to design effective strategies for combining such task vectors in Task Arithmetic. Additionally, more research is needed to develop robust aggregation methods in Federated Learning to address this type of practical training heterogeneity.

\section{Limitations and Conclusion}
\textbf{Limitations}\quad First, in Federated Learning, there remains a gap in theoretical understanding and algorithmic development for addressing data heterogeneity in large-scale experiments. This might arise from the lack of a universal notion of data heterogeneity. While existing theoretical work primarily focuses on first-order heterogeneity such as Assumption \ref{het_assumption}, recent studies \citep{patel2024limits, wang2022unreasonable} suggest that this perspective may be insufficient to fully capture data heterogeneity in practice. This underscores the need for more refined definitions of data heterogeneity and the development of corresponding algorithms.

Second, a recent study \citep{yadav2024matters} shows that when merging very large models such as PaLM-2-64B \citep{anil2023palm}, the differences in performance between various merging methods tend to diminish. While Section \ref{data_het_in_ta} discusses how a high-quality pre-trained model mitigates the adverse effects of data heterogeneity, our framework does not yet provide a theoretical explanation for why the large size of the pre-trained model can also alleviate data heterogeneity, and we leave this as an open question for future work.

\textbf{Conclusion}\quad In this paper, we established a connection between Task Arithmetic and one-shot Federated Learning. By leveraging theoretical insights from Federated Learning, we identified and analyzed two key sources of heterogeneity—data heterogeneity and training heterogeneity—and their impact on Task Arithmetic. Also, we adapted Federated Learning algorithms for model merging, demonstrating their great potential to significantly improve over the performance of Task Arithmetic. We hope this work serves as a foundation for advancing the understanding, enhancing the algorithms, and expanding the applications of Task Arithmetic through the lens of Federated Learning.
\subsubsection*{Author Contributions}
X. Boix ideated the research; Z. Tao conceptualized the theoretical part of the research with contributions from I. Mason and X. Boix; Z. Tao and I. Mason conceptualized the experimental part of the research with contributions from X. Boix; Z. Tao wrote the code and ran the experiments with contributions from I. Mason; Z. Tao analyzed the experimental results with contributions from I. Mason, S. Kulkarni and X. Boix; Z. Tao wrote the paper with contributions from I. Mason, S. Kulkarni and X. Boix; S.Kulkarni and X. Boix supervised the project. 
\subsubsection*{Acknowledgments}
We thank the reviewers and action editor for their insightful comments and constructive feedback, which have significantly improved the quality of this work. We are also grateful to Kasper Vinken and Mehdi Bahrami for valuable discussions and feedback throughout the project.
\bibliographystyle{tmlr}
\bibliography{main.bib}
\newpage
\appendix

\section{Task Arithmetic Property}\label{tangent_space_extension}
In this section, we review a paper that provides theoretical insights into Task Arithmetic \citep{NEURIPS2023_d28077e5}. We examine the relationship between data heterogeneity and the Task Arithmetic property proposed in their work. To facilitate a stronger connection between their framework and our perspective, we adapt their definition of the Task Arithmetic property as follows.
\begin{property}\label{property.1}(Task Arithmetic Property 1 from \citet{NEURIPS2023_d28077e5})
    Consider a set of task vectors $\{\tau_t\}_{t=1}^T$ with associated task data distributions $\{\mathcal{D}_t\}_{t=1}^T$. Suppose all the distributions $\{\mathcal{D}_t\}_{t=1}^T$ have non-intersecting supports. Let $f$ be a network function. We say a network function $f$ satisfies the Task Arithmetic property around $\theta_0$ with respect to $\{\tau_t\}_{t=1}^T$ and $\{\mathcal{D}_t\}_{t=1}^T$ if 
    \[f(x, \theta_0+\lambda\sum_{t=1}^T\tau_t) = f(x, \theta_0+\tau_t)\forall x\in\operatorname{supp}(\mathcal{D}_t)\]
    and \[f(x, \theta_0+\lambda\sum_{t=1}^T\tau_t) = f(x, \theta_0)\forall x\notin \cup_{t=1}^T\operatorname{supp}(\mathcal{D}_t).\]
\end{property}
Notice that the Task Arithmetic property is defined through the network function $f$, while Assumption \ref{het_assumption} for data heterogeneity is defined through the objective functions $L_t$. Recall that $L_t(\theta) = \mathbb{E}_{(x_t, y_t)\sim \mathcal{D}_t}[\ell(\theta;x_t, y_t)]$. The objective function is related to the network function $f$ as follows:
\begin{align}\label{obj_and_network_func_relation}
    L_t(\theta) = \mathbb{E}_{(x_t, y_t)\sim\mathcal{D}_t}[\ell (f(x_t, \theta), y_t)].
\end{align}
This connection highlights how the objective function depends on the underlying network function $f$. Notice that the Task Arithmetic property is a property of the network function $f$, but it does not guarantee the effectiveness of Task Arithmetic. For example, if $\theta_0+\tau_t$ is far from being optimal for objective function $L_t$, then the performance of $f(x,\theta_0+\lambda\sum_{t=1}^T\tau_t)$ will also be significantly suboptimal. In the subsequent analysis, we will demonstrate that data heterogeneity serves as a necessary condition for Task Arithmetic achieving optimal performance, assuming $\theta_0+\tau_t$ is optimal for $L_t$.
\begin{proposition}(Necessary Condition)
    Suppose the Task Arithmetic property holds true and all the objective functions $L_t$ are $H$-smooth and convex in $\theta$. Further assume $\theta_0+\tau_t$ is optimal for $L_t$, i.e., $\nabla_{\theta}L_t(\theta_0+\tau_t) = 0$. Then $\theta_0+\lambda\sum_{t=1}^T\tau_t$ is optimal for $L$, and the data heterogeneity at $\theta_0+\lambda\sum_{t=1}^T\tau_t$ is zero, i.e.,
    \[\frac{1}{T}\sum_{t=1}^T\|\nabla L_t(\theta_0+\lambda\sum_{t=1}^T\tau_t)\|^2= 0.\]
\end{proposition}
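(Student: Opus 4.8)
Writing $\theta_{TA} = \theta_0 + \lambda\sum_{t=1}^T\tau_t$, the plan is to avoid differentiating Property \ref{property.1} directly and instead convert the \emph{value} equalities it supplies into \emph{optimality} statements by exploiting convexity. The engine of the proof is the observation that, for a convex differentiable function, merely attaining the global minimum value at a point already forces the gradient to vanish there; thus a single scalar equality can stand in for a gradient equality.

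First I would combine the relation (\ref{obj_and_network_func_relation}), $L_t(\theta) = \mathbb{E}_{(x_t, y_t)\sim\mathcal{D}_t}[\ell(f(x_t, \theta), y_t)]$, with the first clause of Property \ref{property.1}. Because $f(x, \theta_{TA}) = f(x, \theta_0 + \tau_t)$ holds for every $x \in \operatorname{supp}(\mathcal{D}_t)$ and $L_t$ is an expectation over $(x_t, y_t) \sim \mathcal{D}_t$, the integrands defining $L_t(\theta_{TA})$ and $L_t(\theta_0 + \tau_t)$ coincide $\mathcal{D}_t$-almost everywhere, giving $L_t(\theta_{TA}) = L_t(\theta_0 + \tau_t)$ for each $t$.

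Next I would invoke convexity twice. Since $\nabla_\theta L_t(\theta_0 + \tau_t) = 0$ by hypothesis and $L_t$ is convex, $\theta_0 + \tau_t$ is a global minimizer, so $L_t(\theta_0 + \tau_t) = \min_\theta L_t(\theta)$. The value equality then yields $L_t(\theta_{TA}) = \min_\theta L_t(\theta)$, i.e. $\theta_{TA}$ also attains the global minimum of $L_t$; by convexity and differentiability this forces $\nabla_\theta L_t(\theta_{TA}) = 0$ for every $t$. Both conclusions follow at once: $\frac{1}{T}\sum_{t=1}^T\|\nabla L_t(\theta_{TA})\|^2 = 0$ since every summand vanishes, and $\nabla L(\theta_{TA}) = \frac{1}{T}\sum_{t=1}^T \nabla L_t(\theta_{TA}) = 0$, which certifies $\theta_{TA}$ as a global minimizer of the convex function $L$.

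The step I expect to be the main obstacle is resisting the natural but flawed route of differentiating Property \ref{property.1} to try to equate $\nabla_\theta L_t(\theta_{TA})$ with $\nabla_\theta L_t(\theta_0 + \tau_t)$: the property equates only function values of $f$ at two isolated parameter points and says nothing about $\nabla_\theta f$ or about the behavior of $f$ in a parameter-neighborhood, so no gradient identity can be read off directly. The resolution is precisely the convexity argument above, which never differentiates the property. It is worth noting that this direction uses only convexity and differentiability of the $L_t$; the $H$-smoothness assumption in the statement is not strictly needed here.
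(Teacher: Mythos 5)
Your proof is correct, and it takes a genuinely different --- and in one respect more careful --- route than the paper's. The paper establishes optimality of $\theta_0+\lambda\sum_{t=1}^T\tau_t$ for $L$ by writing $\nabla L_t$ via equation~(\ref{obj_and_network_func_relation}) and then invoking the Task Arithmetic property to replace $f(x_t,\theta_0+\lambda\sum_{t=1}^T\tau_t)$ by $f(x_t,\theta_0+\tau_t)$ \emph{inside the gradient with respect to $\theta$}, i.e.\ it equates $\nabla_\theta L_t$ at two distinct parameter points on the strength of a value identity that is only asserted at those points; this is precisely the step you flag as not directly justified. For the second claim the paper argues differently again, using the co-coercivity inequality for $H$-smooth convex functions, $\frac{1}{2H}\|\nabla L_t(x)-\nabla L_t(y)\|^2\le L_t(y)-L_t(x)+\langle\nabla L_t(x),x-y\rangle$, together with the value equality $L_t(\theta_0+\lambda\sum_{s=1}^T\tau_s)=L_t(\theta_0+\tau_t)$ and $\nabla L_t(\theta_0+\tau_t)=0$. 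Your argument extracts the same value equality from the property (legitimately, since only function values of $f$ are being compared under the expectation) and then obtains both conclusions at once from the elementary fact that a differentiable convex function attaining its global minimum value at a point has vanishing gradient there. This dispenses with $H$-smoothness entirely and never differentiates the property; what the paper's co-coercivity route would buy in exchange is a quantitative bound that degrades gracefully if the value equality held only approximately, whereas your argument is purely qualitative.
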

\begin{proof}
    \begin{align*}
        \|\nabla L(\theta_0+\lambda\sum_{t=1}^T\tau_t)\|&\le \frac{1}{T}\sum_{t=1}^T\|\nabla L_t(\theta_0+\lambda \sum_{t=1}^T\tau_t)\|\\
        &=\frac{1}{T}\sum_{t=1}^T\|\nabla\mathbb{E}_{(x_t,y_t)\sim\mathcal{D}_t}[\ell (f(x_t,\theta_0+\lambda\sum_{t=1}^T\tau_t), y_t)]\| \quad\text{by equation (\ref{obj_and_network_func_relation})}\\
        \\&=\frac{1}{T}\sum_{t=1}^T\|\nabla\mathbb{E}_{(x_t,y_t)\sim\mathcal{D}_t}[\ell (f(x_t,\theta_0+\tau_t), y_t)]\| \quad\text{by Task Arithmetic property}\\
        &= \frac{1}{T}\sum_{t=1}^T\|\nabla L_t(\theta_0+\tau_t)\|\\
        &=0 \quad\text{by the optimality of $\theta_0+\tau_t$.}
    \end{align*}
    Therefore, $\theta_0+\lambda\sum_{t=1}^T\tau_t$ is also optimal for $L$. Next, 
    \begin{align*}
       &\quad\frac{1}{T}\sum_{t=1}^T\|\nabla L_t(\theta_0+\lambda\sum_{t=1}^T\tau_t) \|^2\\
       &=\frac{1}{T}\sum_{t=1}^T\|\nabla L_t(\theta_0+\lambda\sum_{t=1}^T\tau_t)-\nabla L_t(\theta_0+\tau_t) \|^2\quad\text{since $\nabla L_t(\theta_0+\tau_t) = 0$}\\
       &\stackrel{(\rm i)}{\le}\frac{2H}{T}\sum_{t=1}^TL_t(\theta_0+\lambda\sum_{t=1}^T\tau_t)-L_t(\theta_0+\tau_t)+\langle\nabla L_t(\theta_0+\tau_t),\tau_t-\lambda\sum_{t=1}^T\tau_t\rangle\\
       &=\frac{2H}{T}\sum_{t=1}^T\mathbb{E}_{(x_t, y_t)\sim \mathcal{D}_t}[\ell(f(x_t, \theta_0+\lambda \sum_{t=1}^T\tau_t), y_t)]-\mathbb{E}_{(x_t, y_t)\sim \mathcal{D}_t}[\ell (f(x_t, \theta_0+\tau_t), y_t)]\\
      &\stackrel{(\rm ii)}{=} \frac{2H}{T}\sum_{t=1}^T\mathbb{E}_{(x_t, y_t)\sim \mathcal{D}_t}[\ell(f(x_t, \theta_0+\tau_t), y_t)]-\mathbb{E}_{(x_t, y_t)\sim \mathcal{D}_t}[\ell (f(x_t, \theta_0+\tau_t), y_t)]\\
      &=0.
    \end{align*}
    Note that inequality $(\rm i)$ follows from the property that for any $H$-smooth and convex function $L$, $ \frac{1}{2H}\|\nabla L(x)-\nabla L(y)\|^2\le L(y)-L(x)+\langle \nabla L(x), x-y\rangle$, and equality $(\rm ii)$ follows from Task Arithmetic property. Therefore, the data heterogeneity at $\theta_0+\lambda\sum_{t=1}^T$ is zero. 
\end{proof}
The above proposition shows that data heterogeneity being zero is actually a necessary condition for the optimal performance of Task Arithmetic. In other words, the parameters generated by Task Arithmetic have to be a shared optimum for all $L_t$.  
\section{Merging CLIP}\label{epxeriment_details_clip}
\subsection{Additional Experiment Details on Merging CLIP}
\label{additional_exp_details_clip}
In this section, we provide details on the hyperparameter searching process for experiments using CLIP. All experiments on CLIP were conducted on a single NVIDIA V100 GPU. 
\subsubsection{Fine-tuning}\label{clip_fine_tuning}
For each dataset, we fine-tuned ViT-B-32 using three different learning rates combined with four different numbers of epochs, resulting in a total of 12 distinct hyperparameter configurations per dataset. The selected numbers of epochs were chosen to roughly correspond to training for 1000, 2000, 3000 and 4000 iterations, assuming a batch size of 128 for each dataset. To determine the optimal hyperparameter configuration, we used the validation accuracy to select the best combination of learning rate and epochs. Table \ref{ft_and_cv_detail_table_clip} summarizes the fine-tuning hyperparameters and cross-validation details.
\begin{table}[ht]
\centering
\resizebox{0.8\textwidth}{!}{%
\begin{tabular}{@{}cccc@{}}
\toprule
Datasets & Learning Rates       & Epochs               & Best Hyperparameters Configuration \\ \midrule
DTD      & \{1e-4, 1e-5, 1e-6\} & \{38, 76, 114, 152\} & \{1e-5, 114\}            \\ 
GTSRB    & \{1e-4, 1e-5, 1e-6\} & \{6, 11, 17, 22\}    & \{1e-5, 6\}              \\ 
SUN397   & \{1e-4, 1e-5, 1e-6\} & \{7, 14, 21, 28\}    & \{1e-5, 7\}              \\ 
MNIST    & \{1e-4, 1e-5, 1e-6\} & \{3, 5, 8, 10\}      & \{1e-5, 8\}              \\ 
SVHN     & \{1e-4, 1e-5, 1e-6\} & \{2, 4, 6, 8\}       & \{1e-4, 6\}              \\ 
EuroSAT  & \{1e-4, 1e-5, 1e-6\} & \{6, 12, 18, 24\}    & \{1e-5, 18\}             \\ 
Cars     & \{1e-4, 1e-5, 1e-6\} & \{18, 35, 53, 70\}   & \{1e-5, 35\}             \\ 
RESISC45 & \{1e-4, 1e-5, 1e-6\} & \{8, 15, 23, 30\}    & \{1e-5, 23\}             \\ 
\bottomrule
\end{tabular}}
\caption{\textbf{Fine-tuning and cross-validation details for CLIP ViT-B-32}}
\label{ft_and_cv_detail_table_clip}
\end{table}
\subsubsection{Scaling coefficient}
To determine the optimal scaling coefficient $\lambda$, we searched the range $[0.05, 0.1, 0.15, \dots, 1.95, 2.0]$, selecting the value that produces the highest average normalized accuracy in the validation data sets.
\subsubsection{Hyperparameter for FedGMA}
To determine the optimal sign-agreement threshold $\rho$ for FedGMA, we searched the range $[0.1, 0.2, \dots, 1.0]$, selecting the value that yields the highest average normalized accuracy in validation datasets. 
\subsubsection{Hyperparameter for CCLIP}
To determine the optimal threshold $\rho$ for CCLIP, for each experiment, we searched the range generated by a sequence of five evenly spaced numbers between the minimum task vector norm (including) and maximum task vector norm (excluding) used in the experiment, selecting the value that yields the highest average normalized accuracy on validation datasets.  
\subsubsection{Task vector norm}\label{tv_norm_clip}
In Table \ref{tv_norm_table}, we present the norms of the task vectors used in Section \ref{exp_on_mixing_clip}. Homogeneous fine-tuning task vectors, as provided by \citet{ilharco2023editing}, are used in Section \ref{clip_merging_without_training_het}, while heterogeneous fine-tuning task vectors, developed as part of our work (detailed in Appendix \ref{clip_fine_tuning}), are used in Section \ref{mixing_with_training_het}.

\begin{table}[ht]
\centering
\resizebox{\textwidth}{!}{%
\begin{tabular}{@{}ccccccccc@{}}
\toprule
& DTD  & EuroSAT & GTSRB & SUN397 & SVHN  & MNIST & Cars & RESISC45  \\ \midrule
Homogeneous Fine-Tuning   & 2.47 & 2.27    & 2.35  & 2.91   & 2.70  & 2.45  & 2.80 & 2.54      \\
Heterogeneous Fine-Tuning & 2.77 & 2.71    & 1.92  & 2.04   & 23.90 & 3.03  & 2.80 & 3.07      \\
\bottomrule
\end{tabular}}
\caption{{\textbf{Norms of task vectors}}}
\label{tv_norm_table}
\end{table}
\subsection{Additional Experiments Using Task Vectors with Homogeneous Fine-Tuning}\label{baseline_checkpoints_exp_clip}
In this section, we present additional experiments on task vectors generated through a homogeneous fine-tuning process, provided by \citet{ilharco2023editing}. We apply Median, FedGMA, and CCLIP to these task vectors. Due to the homogeneous nature of the fine-tuning process, FedNOVA is not applicable. The hyperparameter search for each method follows the same procedure described in Appendix \ref{additional_exp_details_clip}.

Table \ref{percentage_of_exp_change_three_algs_with_homo_ft} summarizes the percentage of experiments that show improvement, no change, or degradation when compared to Task Arithmetic. Additionally, Figure \ref{hist_of_three_methods_with_homo_ft} uses histograms to depict the frequency of experiments within each range of change in average normalized accuracy.

In most cases, these Federated Learning algorithms fail to outperform Task Arithmetic. Instead, they tend to degrade the performance of the merged models, albeit usually by a small margin. These findings further reinforce the key observation discussed in Section \ref{discussion_on_clip_experiments}: training heterogeneity is a critical factor in practice. Simply regulating the fine-tuning process to eliminate training heterogeneity can significantly enhance the performance of Task Arithmetic.
\begin{table}[h]
\centering
\resizebox{0.8\textwidth}{!}{%
\begin{tabular}{@{}cccc@{}}
\toprule
 &
  \begin{tabular}[c]{@{}c@{}}Percentage of \\ Improved Experiments\end{tabular} &
  \begin{tabular}[c]{@{}c@{}}Percentage  of\\ Unchanged Experiments\end{tabular} &
  \begin{tabular}[c]{@{}c@{}}Percentage of \\ Degraded Experiments\end{tabular} \\ \midrule
Median & 5.67\%  & 0\%  & 94.33\% \\
FedGMA & 9.72\%  & 34\% & 56.28\% \\
CCLIP  & 49.39\% & 0\%  & 50.61\% \\ \bottomrule
\end{tabular}}
\caption{\textbf{Percentage of improved, unchanged, and degraded experiments using different methods compared to Task Arithmetic.}}
\label{percentage_of_exp_change_three_algs_with_homo_ft}
\end{table}
\begin{figure}[h]
\centering
\begin{subfigure}{.3\textwidth}
  \centering
  \includegraphics[width = 1.0\linewidth]{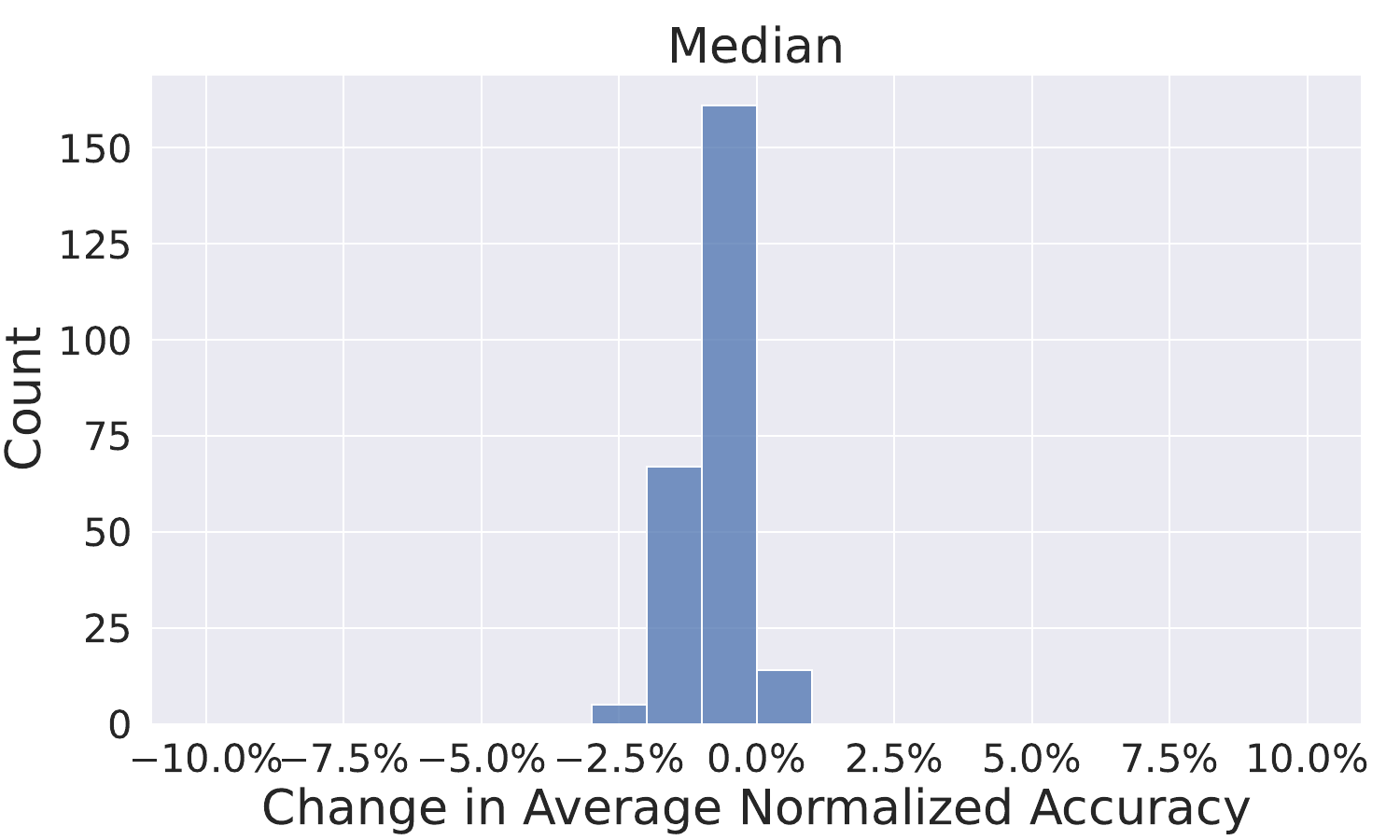}  
\end{subfigure}
\begin{subfigure}{.3\textwidth}
  \centering
  \includegraphics[width=1.0\linewidth]{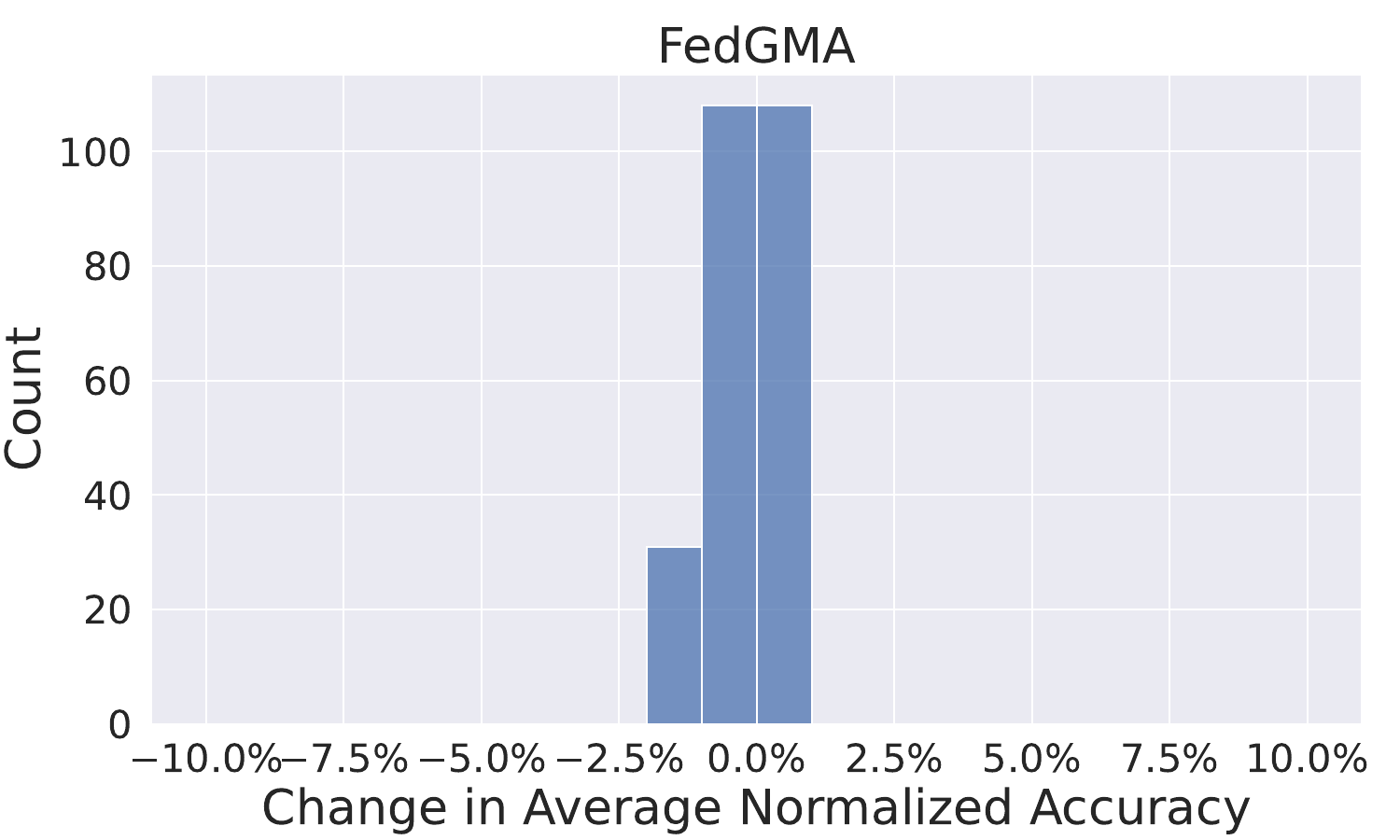}  
\end{subfigure}
\begin{subfigure}{.3\textwidth}
  \centering
  \includegraphics[width=1.0\linewidth]{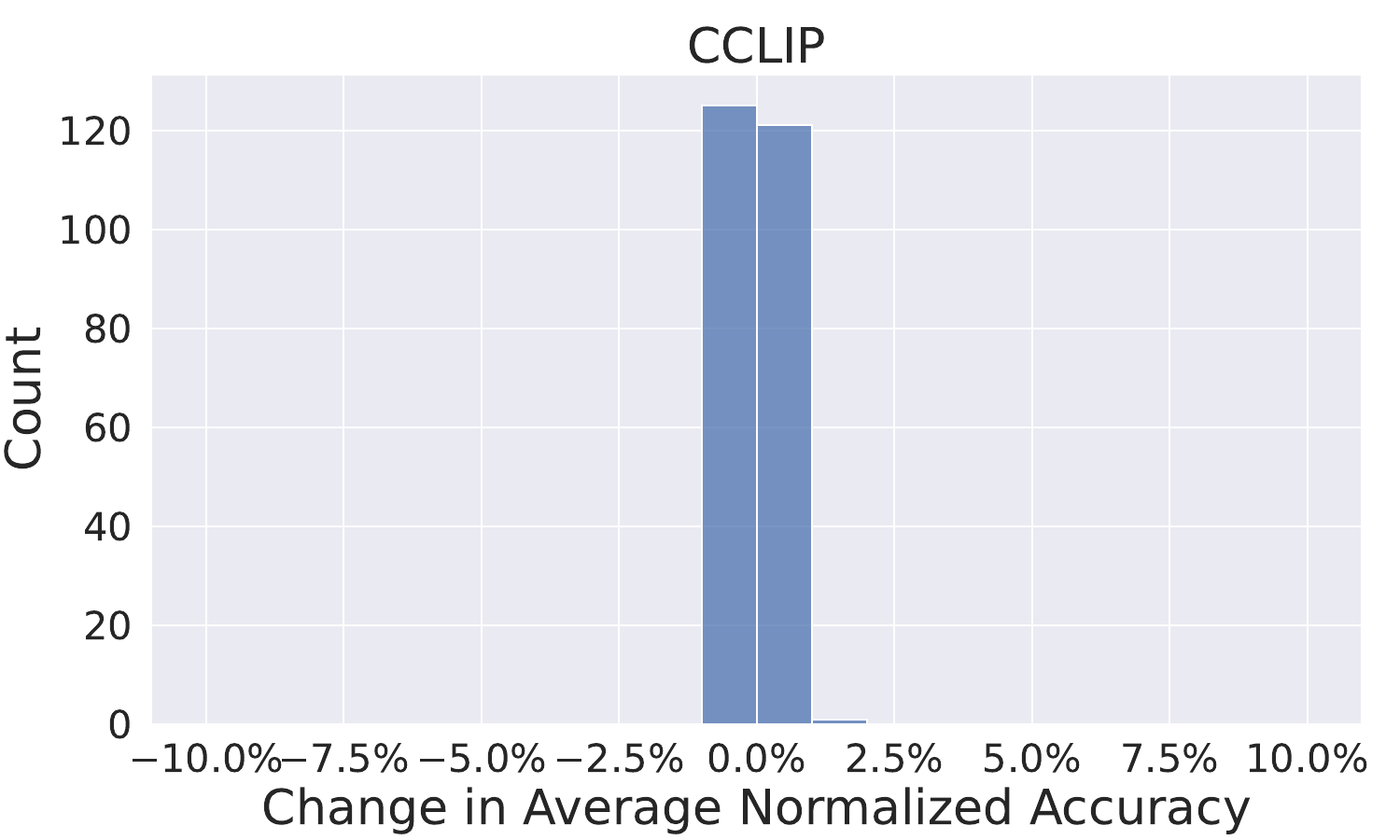}  
\end{subfigure}
\caption{\textbf{Histograms showing the change in average normalized accuracy for three different methods compared to Task Arithmetic by using task vectors with homogeneous fine-tuning.} The x-axis shows the difference in average normalized accuracy relative to Task Arithmetic, with positive values indicating improvement and negative values indicating degradation. The y-axis represents the number of experiments within each range of change values.}
\label{hist_of_three_methods_with_homo_ft}
\end{figure}
\section{Merging LLMs}\label{experiment_details_llms}
\subsection{Additional Experiment Details for Merging LLMs}
In this section, we present additional experimental details for merging LLMs. All experiments in this part were conducted on four NVIDIA V100 GPUs. In Table \ref{ft_llm_download_info}, we provide HuggingFace download links for fine-tuned models used in our experiments. 
\begin{table}[h]
\centering
\resizebox{\textwidth}{!}{%
\begin{tabular}{@{}ccc@{}}
\toprule
 & Model & Download Link \\ \midrule
Mathematical Reasoning & WizardMath-13B          & {\ul https://huggingface.co/vanillaOVO/WizardMath-13B-V1.0}  \\
Code Generation        & Llama-2-13b-code-alpaca & {\ul https://huggingface.co/layoric/llama-2-13b-code-alpaca} \\
Instruction Following  & WizardLM-13B            & {\ul https://huggingface.co/WizardLMTeam/WizardLM-13B-V1.2}  \\ \bottomrule
\end{tabular}}
\caption{\textbf{Fine-tuned model download information}}
\label{ft_llm_download_info}
\end{table}

In order to conduct hyperparameter search, we randomly split $5\%$ of GSM8K, MATH, HumanEval and AlpacaEval into validation datasets. To determine the optimal scaling coefficient $\lambda$, we searched the range $[0.2, 0.4, 0.6, 0.8, 1.0]$ for Task Arithmetic, FedGMA and CCLIP. 

To determine the optimal sign agreement threshold $\rho$ for FedGMA, notice that when there are two task vectors merged together, the sign agreement score for each coordinate is either 0 (opposite sign) or 1 (same sign). Therefore, we simply set $\rho$ to be $0.1$. 

To determine the optimal threshold $\rho$ for CCLIP, for each experiment, we searched the range generated by a sequence of five even spaced numbers between the minimum task vector norm (including) and maximum task vector norm (excluding) used in the experiment. 
\subsection{Additional Experiment Results on Merging LLMs by Median}\label{experiment_on_llm_median}
Table \ref{merge_llm_median} presents the experimental results of applying Median to merge all three task vectors. Compared to the results in Table \ref{acc_on_mixing_llms}, Median enhances the merged model's mathematical reasoning capabilities by preserving most of its task vector, as its task vector has a norm of middle value. However, this improvement comes at the cost of compromising the model's ability for code generation and instruction following. These findings suggest that applying Median to task vectors generated through different fine-tuning methods may be suboptimal, highlighting the need for developing new model merging techniques.
\begin{table}[ht]
\centering
\resizebox{0.7\textwidth}{!}{%
\begin{tabular}{@{}ccccccc@{}}
\toprule
\multirow{2}{*}{Tasks} &
  \multirow{2}{*}{Method} &
  \multicolumn{2}{c}{\begin{tabular}[c]{@{}c@{}}Mathematical\\ Reasoning\end{tabular}} &
  \multicolumn{2}{c}{\begin{tabular}[c]{@{}c@{}}Code\\ Generation\end{tabular}} &
  \begin{tabular}[c]{@{}c@{}}Instruction\\ Following\end{tabular} \\ \cmidrule(lr){3-4}\cmidrule(lr){5-6}\cmidrule(lr){7-7} 
 &
   &
  GSM8K &
  MATH &
  HumanEval &
  MBPP &
  AlpacaEval \\ \midrule
\begin{tabular}[c]{@{}c@{}}Instruction\\ Math\\ Code\end{tabular} &
  Median &
  65.73 &
  13.7 &
  10.37 &
  11.6 &
  53.5 \\ \bottomrule
\end{tabular}}
\caption{\textbf{Performance of Median on merging LLMs}}
\label{merge_llm_median}
\end{table}
\section{Generalization Ability of Task Arithmetic}\label{generalization_of_ta}
In this section, we explore one potential reason behind the strong empirical performance of Task Arithmetic observed in several experiments. We conjecture this success is linked to the strong generalization capabilities that Task Arithmetic may inherit from FedAvg, or local SGD. 

Research on local SGD has shown that, compared to mini-batch SGD which is the core algorithm used in standard centralized training, local SGD can offer better generalization properties \citep{gu2023and, lin2018don, zhu2023decentralized}. \citet{lin2018don} first observed that switching to local SGD after several epochs of mini-batch SGD training enhances model generalization, leading them to propose a \textit{post-local SGD} approach. In this scheme, local SGD is only employed in the second training phase, after initial mini-batch SGD training. This two-phase strategy mirrors Task Arithmetic, where we start with a mini-batch pre-trained model, switch to local SGD, and ultimately aggregate the updates.

\citet{gu2023and} provided theoretical insights into why local SGD improves generalization. They derived a Stochastic Differential Equation (SDE) that models the long-term behavior of local SGD, observing that it induces a larger drift term compared to standard SGD, thereby adding a regularizing effect. Later on, \citet{zhu2023decentralized} proved that decentralized SGD is asymptotically equivalent to minimizing the loss function of an average-direction sharpness-aware minimization algorithm, which enhances generalization by seeking flatter regions in the loss landscape. This challenges the common belief that centralized training always outperforms decentralized approaches.

The similar phenomenon has been observed in the context of Task Arithmetic. For example, \citet{yu2024language} report in Table 1 that Task Arithmetic occasionally surpasses task-specific fine-tuning. In our experiments, particularly when merging LLMs, Task Arithmetic exhibits strong performance. As shown in Table \ref{acc_on_mixing_llms}, Task Arithmetic achieves the best results when combining three task vectors. This performance is likely attributed to the remarkable generalization ability of local SGD, even in the one-shot setting of Task Arithmetic.

\end{document}